\def\1{\bm{1}}
\def\vmu{{\bm{\mu}}}
\def\vtheta{{\bm{\theta}}}
\def\vw{{\bm{w}}}
\def\vx{{\bm{x}}}
\def\vz{{\bm{z}}}
\def\mB{{\bm{B}}}
\def\mF{{\bm{F}}}
\def\mI{{\bm{I}}}
\def\mM{{\bm{M}}}
\def\mX{{\bm{X}}}
\def\mY{{\bm{Y}}}
\def\mSigma{{\bm{\Sigma}}}
\DeclareMathAlphabet{\mathsfit}{\encodingdefault}{\sfdefault}{m}{sl}
\SetMathAlphabet{\mathsfit}{bold}{\encodingdefault}{\sfdefault}{bx}{n}
\DeclareMathOperator*{\argmax}{arg\,max}
\DeclareMathOperator*{\argmin}{arg\,min}
\newcommand{\nn}{\nonumber}
\newtheorem{lemma}{\textbf{Lemma}}
\newtheorem{theorem}{\textbf{Theorem}}
\newtheorem{assumption}{\textbf{Assumption}}
\newtheorem{proposition}{\textbf{Proposition}}
\newtheorem{remark}{\textbf{Remark}}
\title{Gibbs-Based Information Criteria and the Over-Parameterized Regime}
\author{%
  Haobo Chen \\
  University of Florida\\
  \texttt{haobo.chen@ufl.edu} \\
   \And
   Yuheng Bu\\
   University of Florida\\
   \texttt{buyuheng@ufl.edu} \\
  \And
   Gregory W. Wornell\\
   Massachusetts Institute of Technology\\
   \texttt{gww@mit.edu} \\
}
\begin{document}
\maketitle
%

%

\begin{abstract}
Double-descent refers to the unexpected drop in test loss of a learning algorithm beyond an interpolating threshold with over-parameterization, which is not predicted by information criteria in their classical forms due to the limitations in the standard asymptotic approach. We update these analyses using the information risk minimization framework and provide Akaike Information Criterion (AIC) and Bayesian Information Criterion (BIC) for models learned by the Gibbs algorithm. Notably, the penalty terms for the Gibbs-based AIC and BIC correspond to specific information measures, i.e., symmetrized KL information and KL divergence.
We extend this information-theoretic analysis to over-parameterized models by providing two different Gibbs-based BICs to compute the marginal likelihood of random feature models in the regime where the number of parameters $p$ and the number of samples $n$ tend to infinity, with $p/n$ fixed. Our experiments demonstrate that the Gibbs-based BIC can select the high-dimensional model and reveal the mismatch between marginal likelihood and population risk in the over-parameterized regime, providing new insights to understand double-descent.


\end{abstract}

\section{Introduction}
The classical understanding of model selection is that more complex models can capture more complex patterns but tend to overfit and have large generalization error \cite{geman1992}.
This tradeoff results in a $\cup$-shaped curve which is characterized by the classical model selection criterion when test loss is plotted against model complexity. As a result, the models that minimize test loss tend to have moderate complexity.
Recently, the success of deep learning challenges this classical picture since neural networks are often extremely complex (e.g., able to fit random labels \cite{zhang2016understanding}) while \textit{also} generalizing well to yield low test error on unseen samples.

An emerging explanation of this behavior is \textit{double-descent}  \cite{belkin2019reconciling}, which posits that:
1) The classical $\cup$-shaped curve is only valid when the number of model parameters $p$ is smaller than the number of samples $n$.
2) In the over-parameterized regime where $p$ is significantly larger than $n$, and models are complex enough to fit training data perfectly, test loss can decrease with increased model complexity.


To better understand the double-descent phenomenon, we revisit the classical derivations of information criteria and discern that the penalty term in Akaike Information Criterion (AIC) can be interpreted as the generalization error within a broader learning context, while Bayesian Information Criterion (BIC) approximates the marginal likelihood using the empirical risk minimization solution. We further update the classical analyses of AIC and BIC using the information risk minimization framework proposed in \cite{zhang2006information} by focusing on the optimal Gibbs algorithm (distribution). It is shown in \cite{aminian2021exact} that the generalization error of the Gibbs algorithm can be characterized using information measures. This information-theoretic analysis motivates the proposed Gibbs-based information criteria, which readily extend to over-parameterized models. 

\begin{figure*}[!t]
    \centering
    \includegraphics[width=0.42\columnwidth]{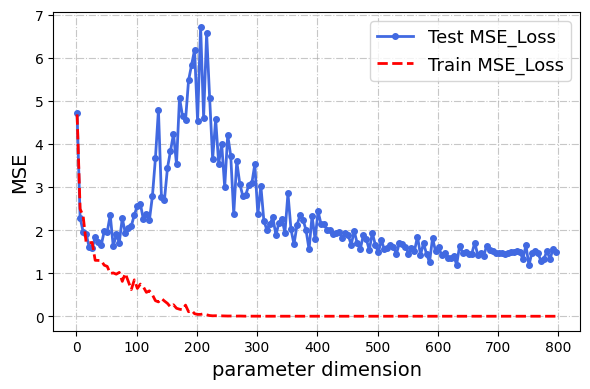}\hspace*{2em}
    \includegraphics[width=0.42\columnwidth]{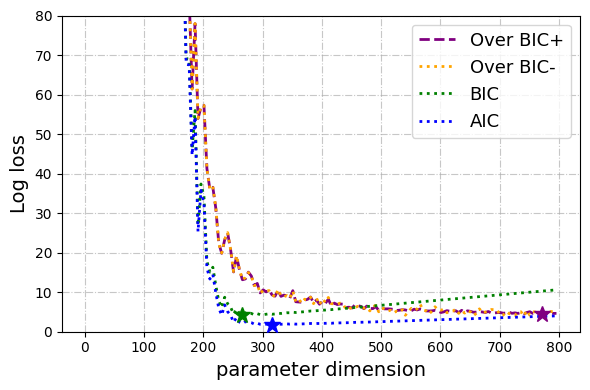}
     \caption{ 
     The test and training mean-squared error (MSE) (\textbf{left}) and the comparison of the proposed over-parameterized $\mathrm{BIC}^+$ and  $\mathrm{BIC}^-$with other classical information criteria (\textbf{right}) for over-parameterized RF models plotted with varying parameter dimension $p$. More details can be found in Section~\ref{sec:exp_over}.}
    \vspace{-1em}
\label{fig:over_loss}
\end{figure*}

We make the following contributions in this paper:
\begin{enumerate}[leftmargin=*,topsep=-0.2em,itemsep=-0.4pt] 
    \item We provide information-theoretic analyses for the generalization error and marginal likelihood of the model learned by the Gibbs algorithm, resulting in Gibbs-based AIC~\eqref{equ: Gibbs_AIC} and Gibbs-based $\mathrm{BIC}^+$~\eqref{equ: Gibbs_BIC} and $\mathrm{BIC}^-$~\eqref{equ: Gibbs_BIC-}, with different information measures as the penalty terms. 
    \item We show that the Gibbs-based information criteria align with the classical information criteria in the classical large $n$ regime theoretically (Theroem~\ref{thm:AIC_classical} and~\ref{thm:BIC_Gibbs_classical}) and empirically. 
    \item We generalize our information-theoretic analysis to over-parameterized random feature (RF) models, which results in over-parameterized Gibbs-based BICs (\eqref{equ:BICover} and~\eqref{equ:BIC-over}) that favor over-parameterized RF model, while classical information criteria cannot; see Figure~\ref{fig:over_loss}.    
    \item We empirically compare the Gibbs-based BICs and AIC in the over-parameterized RF model by decomposing them into different terms (Section~\ref{sec:exp_over}), and show the mismatch between marginal likelihood (BIC) and generalization error (AIC) in the over-parameterized setting, where AIC exhibits double-descent but BIC does not. Such a phenomenon is highly affected by the choice of prior distributions.  
    
\end{enumerate}

\paragraph{Related Work}
Previous work has extended the classical BIC to high dimensions, e.g.,\cite{chen2008extended,fan2013tuning}. However, these works seek to \emph{replace} maximizing marginal likelihood with a different criterion, substituting a penalty term $p f(n)$ in place of the $p \log n$ in BIC. 
A Widely Applicable BIC (WBIC) is proposed in \cite{watanabe2013widely}, which extends the BIC to singular probabilistic models using the asymptotic approximation of the marginal likelihood with Gibbs posterior in the classical large data regime. By contrast, we retain the BIC criterion but analyze it beyond the classical regime for the Gibbs algorithm with an exact information-theoretic analysis.


Double-descent of the population risk with increasing model size was introduced in \cite{belkin2019reconciling}; see also \cite{advani2020high,geiger2019jamming}.
An empirical demonstration of double-descent in modern deep networks is provided in  \cite{nakkiran2019deep}.
A variety of work develops simplified models where the characterization of the double-descent curve can be obtained.
For example, double-descent in linear regression models is investigated in \cite{Belkin_2020,hastie2022surprises,bartlett2020benign,muthukumar2020harmless},
and in linear classification models in \cite{deng2022model,kini2020analytic,gerace2020generalisation}.
The RF model has been adopted to understand double-descent in \cite{mei2022generalization}, which provides a generalization analysis of the performance achieved with ridge regression in the over-parameterized regime.
In some of our analysis we likewise adopt this RF model~\cite{mei2022generalization, d2020double,gerace2020generalisation,weinan2020comparative,liu2022double}, but with a different objective and analysis tools.

Double-descent phenomena have been explained from different perspectives. In
\cite{d2020double, yang2020rethinking,dwivedi2020revisiting}, double-descent curves are explained via a refined version of bias-variance tradeoff, where the bias of the model decreases monotonically with the increase of $p$, but the variance increases and then decreases with $p$.
And the connection of gradient descent dynamics and double-descent is discussed in \cite{weinan2020comparative,advani2020high}. In \cite{nakkiran2019more,d2020triple}, sample-wise double-descent is studied under linear regression, and \cite{heckel2020early} shows that by adjusting the step sizes, sample-wise double-descent can be eliminated by early stopping.


Among recent work, \cite{germain2016pac} connects the PAC-Bayes generalization bound with marginal likelihood and approximates the KL divergence for model selection using MCMC algorithms. However, they did not consider the over-parameterized regime. For over-parameterized models, \cite{hodgkinson2023interpolating} study the Interpolating BIC with a special focus on interpolated models with zero training loss.
Our paper is most related to \cite{immer2021scalable,lotfi2022bayesian}, which also examines the difference between marginal likelihood and generalization error in model selection. However, by focusing on the Gibbs algorithm, we are able to interpret the mismatch between AIC and BIC via information measures, which is more insightful in understanding double descent and other complex behaviors. 




\section{Preliminaries}\label{sec:pre}

We consider the following standard supervised learning formulation. 
Let $S = \{Z_i\}_{i=1}^n$ be the training set, where each sample $Z_i= \{(X_i,Y_i)\} \in \mathcal{Z}$ are i.i.d. generated from the data distribution $P_Z$, and with the realization of this sequence denoted as $\vz^n = (\vx^n,y^n)$.
We denote the parameter of a machine learning model by $w \in \mathcal{W}$, where $\mathcal{W}$ is the parameter space. The performance of the model is measured by a loss $\ell\colon\mathcal{W} \times \mathcal{Z}  \to \mathbb{R}$, and the log loss $\ell_{\log}(w,\vz) \triangleq -\log P(y | \vx;w)$ associated with a parametric probabilistic model $P(y | \vx;w)$ is of particular interest to us.

We define the empirical risk and the population risk associated with a given $w$ as
\begin{align} \label{equ:risks}
    L_{E}(w, \vz^n) &\triangleq \frac{1}{n}\sum_{i=1}^{n}\ell(w,\vz_{i}),  \\
    L_{P}(w,P_{Z}) &\triangleq \mathbb{E}_{P_{Z}}\big[\ell(w,Z)\big],
\end{align}
respectively. A learning algorithm can be modeled as a randomized mapping from the training set $S$ onto a model parameter $\hat{W}$  according to the conditional distribution $P_{\hat{W}|S}$.

Although the widely-used empirical risk minimization (ERM) is deterministic optimization,
it is usually solved via stochastic gradient descent (SGD), which is random in nature. The ERM solution of log loss is the well-known maximum likelihood estimate (MLE), i.e.,  $\hat{W}_{\mathrm{ML}} \triangleq \argmax_{w} \hat{L}(w)$, 
where $\hat{L}(w) \triangleq   \sum_{i=1}^n \log  P_k(y_i| \vx_i ;w)$
denotes the log-likelihood of $\vz^n$. The expected generalization error quantifying the degree of over-fitting can be expressed in the form
\begin{equation}\label{equ:gen}
    \overline{gen}(P_{\hat{W}|S},P_{S}) \triangleq \mathbb{E}_{P_{\hat{W},S}}\big[L_{P}(\hat{W},P_Z)-L_{E}(\hat{W},S)\big],
\end{equation}
where the expectation is taken over the joint distribution $P_{\hat{W},S} =  P_{\hat{W}|S}\otimes P_S$.

\subsection{The Classical Forms of AIC and BIC}
\label{sec:pre_AIC}


The standard derivation of the AIC and the BIC arises from the classical asymptotic analysis of MLE. Assume we have $K$ candidate models $M_1, M_2, \ldots, M_K$, and each model $M_k$ is characterized by a parametric probabilistic model $P_k(y | \vx;\vtheta_k)$,
 a prior distribution $\pi_k(\vtheta_k)$, where
$\vtheta_k \in \mathcal{W}_k \subset \mathbb{R}^{p_k}$ is the parameter vector.
We demonstrate that AIC selects the model with the smallest population risk, and BIC identifies the true data-generating model by maximizing the marginal likelihood. 


\paragraph{AIC}
This criterion \cite{RePEc:eee:econom:v:16:y:1981:i:1:p:3-14} ranks statistical models based on the Kullback-Leibler (KL) divergence between the true data distribution $P_Z$ and the learned parametric model. With $\smash{\hat{\vtheta}_{\mathrm{ML}}^{(k)}}$ denoting the MLE of the $k$\/th model,
AIC selects the model as the solution to
\begin{align}\label{equ:approx_kl_divergence}
&\underset{k}{\operatorname{argmin}} \ D(P_{Z}\|P_k(y| \vx ;\hat{\vtheta}_{\mathrm{ML}}^{(k)}))\\
=&\underset{k}{\operatorname{argmin}}\  \mathbb{E}_{P_{Z}}\big[-\log P_k(y| \vx ;\hat{\vtheta}_{\mathrm{ML}}^{(k)})\big]. 
\end{align}
The term $\mathbb{E}_{P_{Z}}[-\log P_k(y| \vx ;\hat{\vtheta}_{\mathrm{ML}}^{(k)})]$ can be interpreted as the population risk $L_{P}(\hat{\vtheta}_{\mathrm{ML}}^{(k)},P_{Z})$ of the MLE under log-loss. From this perspective, AIC measures how well the model fits the unknown data distribution $P_Z$, with smaller AIC values suggesting a lower population risk. 



As the true distribution $P_Z$ is unknown, the AIC is obtained by approximating the population risk as the sum of empirical risk, i.e., the negative log-likelihood of $\smash{\hat{\vtheta}_{\mathrm{ML}}^{(k)}}$ on training samples and a penalty term corresponding to the generalization error. In the classic regime where $p_k$ is fixed and $n\to \infty$, the
asymptotic normality of MLE yields
\begin{align} \label{equ:AIC}
    \mathrm{AIC} &=-
    \frac{\hat{L}(\hat{\vtheta}_{\mathrm{ML}})}{n} + \frac{p}{n}.
\end{align}
Note that our form of AIC differs by a factor of 2 from its classical form to facilitate a direct comparison to population risk and generalization error as defined in~\eqref{equ:gen}. Detailed derivations are provided, for reference, in Appendix~\ref{appx:AIC}.

\paragraph{BIC}
This criterion \cite{schwarz1978estimating} ranks statistical models by their
marginal likelihoods of generating the data,
with smaller values of the BIC correspond
to larger marginal likelihoods.
Approximating the marginal likelihood of observing $\vz^n$ for $M_k$, i.e., 
$m_k(\vz^n) \triangleq  \int P_k(y^n| \vx^n ;\vtheta_k) \, \pi_k(\vtheta_k) \, d \vtheta_k$, 
Laplace's method yields
\begin{equation*}
    \log m_k(\vz^n) =
    \hat{L}_{k}(\hat{\vtheta}_{\mathrm{ML}}^{(k)}) - \frac{p_k}{2} \log n + O(1),\quad n\to\infty.
\end{equation*}
In turn, BIC  
is obtained 
by dropping terms that do not scale with $n$ and scaling by $-1/n$:
\begin{align} \label{equ:BIC}
    \mathrm{BIC} &= -
    \frac{\hat{L}(\hat{\vtheta}_{\mathrm{ML}})}{n} + \frac{p\log n}{2n}.
\end{align}
When, further, $P(M_k)=1/K$, we obtain
\begin{equation}
    P(M_k|\vz^n)
    = \frac{m_k(\vz^n) P(M_k) }{\sum_{k=1}^K m_k(\vz^n) P(M_k)}
    \propto m_k(\vz^n).
\end{equation}
Thus, when we assume a uniform prior over different models, the BIC  ranks models by their posterior probability of generating the training data, and choosing the smallest BIC 
corresponds to the maximum a posteriori rule for model selection. A detailed derivation is provided in Appendix~\ref{appx:BIC}.



Both \eqref{equ:AIC} and \eqref{equ:BIC} share a common first term, representing the average negative log-likelihood of the training data for MLE, which can be interpreted as the empirical risk with log-loss, decreasing as we adopt more complex models. We note that AIC and BIC in the classical $n \to \infty$ regime are independent of the form of the model family $P(y|\vx;\vtheta)$ and the prior distribution $\pi(\vtheta)$, which makes it compatible with general distribution families subject to mild smoothness constraints. 
However, they select different models because AIC and BIC differ in the second penalty term.



\subsection{Information Risk Minimization and Gibbs Algorithm} \label{appx:gibbs}

Classical AIC and BIC depend on MLE, which can be viewed as an ERM solution that purely minimizes empirical risk. 
Instead, we motivate the Gibbs algorithm using an information risk minimization framework, which minimizes both empirical risk and a generalization error bound. 

We start with the following mutual information-based generalization error bound proposed in \cite{xu2017information}. 


%
  

\begin{lemma}[\cite{xu2017information}]\label{lemma:gen_bound}
Suppose the loss function $\ell(w,z) \in [0,1]$ is bounded, and $S=\{Z_i\}_{i=1}^n$ contains $n$ i.i.d. training samples, then $|\overline{gen}(P_{\hat{W}|S},P_S)|\leq \sqrt{I(\hat{W};S)/(2n)}$.
\end{lemma}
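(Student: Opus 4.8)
The plan is to combine the Donsker--Varadhan variational formula for KL divergence with a sub-Gaussian concentration bound for the empirical risk, which is the standard route for this type of mutual-information generalization bound. First I would express $\overline{gen}$ as a difference of expectations of a single function under two coupled laws. Writing $g(w,\vz^n)\triangleq L_E(w,\vz^n)=\frac1n\sum_{i=1}^n\ell(w,\vz_i)$ and letting $\bar S$ be an independent copy of $S$ so that $(\hat W,\bar S)\sim P_{\hat W}\otimes P_S$, the i.i.d.\ assumption gives $\mathbb{E}_{P_S}[g(w,\bar S)]=L_P(w,P_Z)$ for every fixed $w$. Since $\hat W$ has the same marginal under $P_{\hat W,S}$ and under $P_{\hat W}\otimes P_S$, this yields
\[
\overline{gen}(P_{\hat W|S},P_S)=\mathbb{E}_{P_{\hat W}\otimes P_S}\!\big[g(\hat W,\bar S)\big]-\mathbb{E}_{P_{\hat W,S}}\!\big[g(\hat W,S)\big].
\]

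The core step is then a change-of-measure estimate. For $\lambda\in\mathbb{R}$, apply the Donsker--Varadhan inequality $\mathbb{E}_P[f]\le D(P\|Q)+\log\mathbb{E}_Q[e^{f}]$ with $P=P_{\hat W,S}$, $Q=P_{\hat W}\otimes P_S$, and $f=\lambda\big(g(\hat W,S)-L_P(\hat W,P_Z)\big)$, noting $D(P\|Q)=I(\hat W;S)$. To bound the log-moment-generating term under $Q$, fix $w$: then $g(w,\bar S)$ is an average of $n$ i.i.d.\ random variables taking values in $[0,1]$, so by Hoeffding's lemma it is sub-Gaussian with variance proxy $1/(4n)$ about its mean $L_P(w,P_Z)$, i.e.\ $\mathbb{E}_{P_S}[e^{\lambda(g(w,\bar S)-L_P(w,P_Z))}]\le e^{\lambda^2/(8n)}$; taking the further expectation over $\hat W$ under the product law preserves this bound, so $\log\mathbb{E}_Q[e^{f}]\le\lambda^2/(8n)$. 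Combining with the display above and the identity $\mathbb{E}_{P_{\hat W,S}}[g(\hat W,S)-L_P(\hat W,P_Z)]=-\overline{gen}$ gives $-\lambda\,\overline{gen}\le I(\hat W;S)+\lambda^2/(8n)$.

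Finally I would optimize over $\lambda$. Taking $\lambda>0$ rearranges to $\overline{gen}\ge-\big(I(\hat W;S)/\lambda+\lambda/(8n)\big)$, and the choice $\lambda=\sqrt{8n\,I(\hat W;S)}$ makes the right-hand side $-\sqrt{I(\hat W;S)/(2n)}$; repeating with $\lambda<0$ (equivalently applying the argument to $-f$) gives the matching upper bound $\overline{gen}\le\sqrt{I(\hat W;S)/(2n)}$, and together these are the claim. Equivalently, one can package the two middle steps as a generic decoupling lemma: if $f(X,Y)$ is $\sigma$-sub-Gaussian under $P_X\otimes P_Y$ then $|\mathbb{E}_{P_{XY}}[f]-\mathbb{E}_{P_X\otimes P_Y}[f]|\le\sqrt{2\sigma^2 I(X;Y)}$, invoked here with $(X,Y)=(\hat W,S)$, $f=L_E$, and $\sigma^2=1/(4n)$. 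There is no serious obstacle here --- the result is classical --- so the only thing to watch is the bookkeeping: correctly tracking which distribution each expectation is taken under when decoupling, and getting the sub-Gaussian variance proxy right (the factor $1/4$ from boundedness in $[0,1]$ via Hoeffding, divided by $n$ from averaging), since an error there changes the constant in the final bound.
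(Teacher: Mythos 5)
Your argument is correct and is exactly the standard Donsker--Varadhan plus sub-Gaussian decoupling proof of this bound from Xu and Raginsky, which the paper simply cites (Lemma~\ref{lemma:gen_bound}) without reproducing a proof. The bookkeeping checks out: each $\ell(w,Z_i)\in[0,1]$ is $\tfrac{1}{4}$-sub-Gaussian, the average has variance proxy $\tfrac{1}{4n}$, and optimizing over $\lambda$ of both signs yields the two-sided bound $|\overline{gen}(P_{\hat{W}|S},P_S)|\leq \sqrt{I(\hat{W};S)/(2n)}$ with the right constant.
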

From Lemma $\ref{lemma:gen_bound}$, we know that the mutual information between training data $S$ and the learned parameter $\hat{W}$ can be used as an upper bound for generalization error. Thus, 
\cite{xu2017information} further considers the following algorithm that minimizes empirical risks regularized with mutual information,
\begin{equation}\label{equ:Infor_Mini_ML}
P_{\hat{W}|S}^{*}=\underset{P_{\hat{W}|S}}{\operatorname{argmin}}\  \mathbb{E}_{P_{\hat{W},S}}\big[L_{E}(\hat{W},S)\big] + \frac{1}{\beta}I(\hat{W};S),
\end{equation}
where $\beta >0$ controls the regularization term and balances between over-fitting and generalization. As $\beta \to \infty$, it reduces back to the standard ERM algorithm.

As computing $I(\hat{W};S)$ requires the knowledge of $P_{\hat{W}}$, \cite{zhang2006information,xu2017information,perlaza2022empirical} further relax \eqref{equ:Infor_Mini_ML} by replacing it with an upper bound $D(P_{\hat{W}|S}\|\pi|P_{S}) \ge I(\hat{W};S)$, where $
\pi$ is an arbitrary prior distribution over $\mathcal{W}$. The following lemma characterizes the solution to the relaxed problem.

\begin{lemma}[\cite{zhang2006information,perlaza2022empirical}]\label{lemma:Infor_mini} The minimum value of the following information risk minimization (IRM) problem is
\begin{align}\label{equ:Infor_Mini_KL}
\underset{P_{\hat{W}|S}}{\operatorname{min}}\  \mathbb{E}_{P_{\hat{W},S}}\big[L_{E}(\hat{W},S)\big] + \frac{1}{\beta}D(P_{\hat{W}|S}\| \pi|P_{S})\nonumber \\
= -\frac{1}{\beta}\mathbb{E}_{P_S}\big[ \log \mathbb{E}_{\pi}[ e^{-\beta L_{E}(W,S)}]\big],
\end{align}
which is achieved by the following Gibbs algorithm 
\begin{equation}\label{equ:Gibbs}
P_{\hat{W}|S}^{*}(w|s)=\frac{\pi(w) e^{-\beta L_{E}(w,s)}}{\mathbb{E}_{\pi}\big[ e^{-\beta L_{E}(W,s)}\big]},\ \text{ for }\ \beta >0.
\end{equation}
\end{lemma}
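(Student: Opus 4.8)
The plan is to reduce \eqref{equ:Infor_Mini_KL} to a pointwise-in-$s$ application of the Gibbs variational principle (the Donsker--Varadhan identity). Since the conditional relative entropy is by definition an average, $D(P_{\hat W|S}\|\pi\mid P_S) = \mathbb{E}_{P_S}\big[D(P_{\hat W|S=s}\|\pi)\big]$, and the empirical-risk term is likewise an average over $P_S$, the whole objective in \eqref{equ:Infor_Mini_KL} splits as
\[
\mathbb{E}_{P_{\hat W,S}}\big[L_E(\hat W,S)\big] + \tfrac1\beta D(P_{\hat W|S}\|\pi\mid P_S) = \mathbb{E}_{P_S}\Big[\,\mathbb{E}_{Q_s}\big[L_E(\hat W,s)\big] + \tfrac1\beta D(Q_s\|\pi)\,\Big],
\]
with $Q_s \triangleq P_{\hat W|S=s}$, and there is no coupling across distinct realizations $s$. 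Hence it suffices to minimize the bracketed functional over a single distribution $Q$ for each fixed $s$, and afterwards verify that the resulting minimizers assemble into a legitimate (measurable-in-$s$) Markov kernel.

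For fixed $s$ I would restrict to $Q \ll \pi$ (otherwise $D(Q\|\pi)=\infty$ and, since $L_E(\cdot,s)$ is bounded below for the losses considered, the objective is $+\infty$), and introduce $Z(s) \triangleq \mathbb{E}_\pi\big[e^{-\beta L_E(W,s)}\big] \in (0,\infty)$, so that $P^*$ defined by \eqref{equ:Gibbs} satisfies $\log\tfrac{dP^*}{d\pi}(w) = -\beta L_E(w,s) - \log Z(s)$. The computational heart of the argument is the "energy split"
\[
\mathbb{E}_{Q}\big[L_E(\hat W,s)\big] + \tfrac1\beta D(Q\|\pi) \;=\; \tfrac1\beta D(Q\|P^*) \;-\; \tfrac1\beta \log Z(s),
\]
which follows by writing $D(Q\|\pi) = \mathbb{E}_Q\big[\log\tfrac{dQ}{dP^*}\big] + \mathbb{E}_Q\big[\log\tfrac{dP^*}{d\pi}\big] = D(Q\|P^*) - \beta\,\mathbb{E}_Q[L_E(\hat W,s)] - \log Z(s)$. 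Because $D(Q\|P^*)\ge 0$ with equality iff $Q = P^*$, the minimum over $Q$ of the left-hand side equals $-\tfrac1\beta\log Z(s)$ and is attained uniquely at $Q = P^*$.

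Finally I would take $\mathbb{E}_{P_S}[\cdot]$ of this identity to conclude that the minimum value in \eqref{equ:Infor_Mini_KL} is $-\tfrac1\beta \mathbb{E}_{P_S}\big[\log \mathbb{E}_\pi[e^{-\beta L_E(W,S)}]\big]$, achieved by the kernel $s \mapsto P^*_{\hat W|S=s}$ of \eqref{equ:Gibbs}; measurability of this kernel in $s$ follows from joint measurability of $(w,s)\mapsto L_E(w,s)$ together with measurability of $s\mapsto Z(s)$. The argument is essentially the standard Gibbs variational principle, so there is no deep obstacle: the two things that need care are the interchange of the minimization with the expectation over $S$ — legitimate precisely because the regularizer is the conditional KL $D(P_{\hat W|S}\|\pi\mid P_S)$, an expectation over $P_S$ with no constraint linking different $s$ — and the routine integrability/measurability bookkeeping ($Z(s)\in(0,\infty)$, $Q\ll\pi$, finiteness of the relevant expectations), which is immediate under the stated boundedness of the loss.
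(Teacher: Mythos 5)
Your proof is correct: the pointwise-in-$s$ reduction, the ``energy split'' identity $\mathbb{E}_{Q}[L_E(\hat W,s)] + \tfrac1\beta D(Q\|\pi) = \tfrac1\beta D(Q\|P^*) - \tfrac1\beta\log Z(s)$, and nonnegativity of relative entropy give exactly the stated minimum and minimizer. The paper cites this lemma from prior work without reproducing a proof, and your argument is the standard Gibbs variational principle used in those references, so there is nothing further to reconcile.
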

In addition, 
\cite{perlaza2023validation} shows that the minimum value of IRM can be decomposed into the empirical risk under the prior distribution and the relative entropy $D(\pi\|P_{\hat{W}|S})$. 
\begin{lemma}[\cite{perlaza2023validation}]\label{lemma:Infor_prior}
The Gibbs distribution $P_{\hat{W}|S}^{*}(w|s)$ in form $\eqref{equ:Gibbs}$ satisfies 
\begin{align}\label{equ:ERM_RER}
&\mathbb{E}_{\pi\otimes P_S }\big[L_{E}(W,S)\big] - \frac{1}{\beta}D( \pi\|P_{\hat{W}|S}^{*}|P_S)\nonumber,\\
= &-\frac{1}{\beta} \mathbb{E}_{P_S}\big[\log \mathbb{E}_{\pi}[ e^{-\beta L_{E}(W,S)}].
\end{align}
\end{lemma}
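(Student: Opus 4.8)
The plan is to prove the identity pointwise in the realization $s$ of $S$ and then integrate against $P_S$. First I would take the logarithm of the Gibbs density \eqref{equ:Gibbs} with $s$ fixed:
\begin{equation*}
\log P_{\hat{W}|S}^{*}(w|s) = \log \pi(w) - \beta\, L_{E}(w,s) - \log \mathbb{E}_{\pi}\big[e^{-\beta L_{E}(W,s)}\big].
\end{equation*}
The key step is to substitute this into the definition of the (unconditional) relative entropy $D\big(\pi\|P_{\hat{W}|S}^{*}(\cdot|s)\big) = \mathbb{E}_{\pi}\big[\log \pi(W) - \log P_{\hat{W}|S}^{*}(W|s)\big]$. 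The two $\log\pi(W)$ terms cancel, the normalizing factor $\log \mathbb{E}_{\pi}[e^{-\beta L_{E}(W,s)}]$ is deterministic and pulls out of the $\mathbb{E}_{\pi}$, and what remains is
\begin{equation*}
D\big(\pi\|P_{\hat{W}|S}^{*}(\cdot|s)\big) = \beta\, \mathbb{E}_{\pi}\big[L_{E}(W,s)\big] + \log \mathbb{E}_{\pi}\big[e^{-\beta L_{E}(W,s)}\big].
\end{equation*}

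Dividing by $\beta$ and rearranging gives, for every $s$,
\begin{equation*}
\mathbb{E}_{\pi}\big[L_{E}(W,s)\big] - \tfrac{1}{\beta}\, D\big(\pi\|P_{\hat{W}|S}^{*}(\cdot|s)\big) = -\tfrac{1}{\beta}\log \mathbb{E}_{\pi}\big[e^{-\beta L_{E}(W,s)}\big].
\end{equation*}
I would then take $\mathbb{E}_{P_S}$ of both sides: by definition of the conditional relative entropy $D(\pi\|P_{\hat{W}|S}^{*}|P_S) = \mathbb{E}_{P_S}\big[D(\pi\|P_{\hat{W}|S}^{*}(\cdot|S))\big]$ and of the joint expectation $\mathbb{E}_{\pi\otimes P_S}$, the left side becomes $\mathbb{E}_{\pi\otimes P_S}[L_E(W,S)] - \tfrac{1}{\beta}D(\pi\|P_{\hat{W}|S}^{*}|P_S)$ and the right side becomes $-\tfrac{1}{\beta}\mathbb{E}_{P_S}[\log \mathbb{E}_{\pi}[e^{-\beta L_E(W,S)}]]$, which is exactly \eqref{equ:ERM_RER}. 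Equivalently, one can phrase the conclusion as: the right-hand side is precisely the optimal IRM value from Lemma~\ref{lemma:Infor_mini}, so the statement asserts that this optimal value admits the alternative ``prior-centered'' decomposition into the prior's empirical risk minus a scaled relative entropy $D(\pi\|P_{\hat{W}|S}^{*}|P_S)$ — a kind of dual to the decomposition in Lemma~\ref{lemma:Infor_mini}.

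I do not expect a genuine obstacle here: the argument is a one-line manipulation of the Gibbs density, and the cancellation of $\log\pi$ is what makes it clean. The only point requiring mild care is well-definedness — one should note that $\mathbb{E}_{\pi}[L_E(W,s)]$ and $\log\mathbb{E}_{\pi}[e^{-\beta L_E(W,s)}]$ are finite (e.g.\ under the boundedness of $\ell$ assumed in Lemma~\ref{lemma:gen_bound}, or more generally whenever both sides of \eqref{equ:ERM_RER} are defined), so that splitting the expectation over the additive decomposition and rearranging the terms is legitimate, and that $P_{\hat{W}|S}^{*}(\cdot|s)$ and $\pi$ are mutually absolutely continuous so the relative entropy is the finite integral used above.
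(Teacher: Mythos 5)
Your proof is correct: substituting the log of the Gibbs density \eqref{equ:Gibbs} into $D\bigl(\pi\|P_{\hat{W}|S}^{*}(\cdot|s)\bigr)$, cancelling the $\log\pi$ terms, and averaging over $P_S$ is exactly the standard verification of this identity. The paper itself cites the lemma from the literature without reproving it, but your calculation is the prior-centered mirror image of the paper's own proof of Proposition~\ref{prop:Gibbs_marginal} in Appendix~\ref{appx:Gibbs_BIC} (which does the same manipulation with $D(P^{*}_{\hat{W}|S}\|\pi)$), so there is nothing further to add beyond your integrability remark.
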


\subsection{Sampling Algorithms for Gibbs distribution}\label{sec:SGLD}

The Gibbs distribution was first proposed by \cite{gibbs1902elementary} in statistical mechanics and further explored in the context of information theory by \cite{jaynes1957information}. In general, it is difficult to compute Gibbs posterior directly due to the integral in the partition function $\mathbb{E}_{\pi}\big[ e^{-\beta L_{E}(W,s)}\big]$. 
In practice, we delve into two stochastic algorithms, stochastic gradient Langevin dynamics (SGLD) or Metropolis-adjusted Langevin algorithm (MALA), to obtain samples from the Gibbs distribution. 



\paragraph{SGLD} 
The SGLD algorithm is defined using the following update formula,
\begin{equation}\label{SGLD}
    \hat{W}_{t+1}=\hat{W}_{t}-\eta\nabla L_{E}(\hat{W}_{t},s)+\sqrt{\frac{2\eta}{\beta}}\zeta_{t},\ t=0,1,...,
\end{equation}
where $\zeta_{t}$ is a standard Gaussian random vector and $\eta\geq 0$ is the step size. SGLD can also be viewed as a noisy version of standard SGD.
It is shown in \cite{raginsky2017nonconvex} that under some conditions on loss function, the Wasserstein distance between the distribution $\smash{P_{\hat{W}_t|S}}$ induced by SGLD and the Gibbs distribution $\smash{P_{\hat{W}|S}^*}$ will converge to zero as $t$ goes infinity. This allows us to sample $\hat{W}$ from the Gibbs distribution using SGLD with sufficiently large training steps $t$. Note that SGLD has been applied similarly to optimize generalization bound in \cite{dziugaite2019entropysgd}.

\paragraph{MALA}
Another approach is the Metropolis-adjusted Langevin
algorithm (MALA)~\cite{dwivedi2018log}. MALA and SGLD are first-order sampling methods since they have similar gradient update formulas, which guarantees that both algorithms converge to the Gibbs distribution. 
MALA differs from the SGLD by introducing an additional Metropolis-adjusted step, which provides a faster
convergence rate, as shown in~\cite{dwivedi2018log,mangoubi2019nonconvex,holzmuller2023convergence}. A more detailed discussion of our implementations of SGLD and MALA can be found in Appendix~\ref{appx:SGLD}.


\section{Gibbs-based Information Criteria}
\label{cbic}
We now develop information-theoretic analyses for Gibbs-based AIC and BIC, following the same classical principles. AIC captures the population risk, and BIC approximates the log marginal likelihood. We demonstrate that our proposed Gibbs-based information criteria align with the classical information criteria in the traditional large $n$ regime and discuss how our information-theoretic analysis can be generalized to the over-parameterized regime.



\subsection{Gibbs-based AIC}

As we discussed in Section~\ref{sec:pre_AIC}, the penalty term in AIC can be viewed as the generalization error of MLE with log-loss. Thus, we start with the following result from \cite{aminian2021exact}, which provides an exact characterization for the generalization error of the Gibbs algorithm using information measure. 
\begin{proposition}
[\cite{aminian2021exact}]\label{prop:Gibbs_gen}
For the Gibbs algorithm defined in \eqref{equ:Gibbs},
the expected generalization error is 
\begin{equation}
    \overline{gen}(P_{\hat{W}|S}^{*},P_S) =  I_{\mathrm{SKL}}(P_{\hat{W}|S}^{*},P_S)/\beta,
\end{equation}
where $I_{\mathrm{SKL}}(P_{\hat{W}|S}^{*},P_S)$ is the symmetrized KL information between $\hat{W}$ and $S$, defined as follows
\begin{small}
\begin{equation*}
    I_{\mathrm{SKL}}(P_{Y|X},P_X) \triangleq D(P_{X,Y}\| P_{X}\otimes P_{Y}) + D( P_{X}\otimes P_{Y}\| P_{X,Y}).
\end{equation*}
\end{small}
\end{proposition}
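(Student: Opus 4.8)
The plan is to compute $I_{\mathrm{SKL}}(P_{\hat{W}|S}^{*},P_S)$ directly from the closed form of the Gibbs posterior in \eqref{equ:Gibbs} and match it against $\beta\,\overline{gen}$. Write $\Lambda(s) \triangleq \log \mathbb{E}_{\pi}\big[e^{-\beta L_{E}(W,s)}\big]$ for the log-partition function, so that $\log P_{\hat{W}|S}^{*}(w|s) = \log\pi(w) - \beta L_{E}(w,s) - \Lambda(s)$, and let $P_{\hat{W}}$ denote the induced marginal of $\hat{W}$.

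The first key step is to simplify the symmetrized KL. Expanding
\begin{align*}
I_{\mathrm{SKL}}(P_{\hat{W}|S}^{*},P_S)
&= \mathbb{E}_{P_{\hat{W},S}}\!\left[\log\frac{P_{\hat{W}|S}^{*}(\hat{W}|S)}{P_{\hat{W}}(\hat{W})}\right]
 + \mathbb{E}_{P_{\hat{W}}\otimes P_S}\!\left[\log\frac{P_{\hat{W}}(\hat{W})}{P_{\hat{W}|S}^{*}(\hat{W}|S)}\right] \\
&= \mathbb{E}_{P_{\hat{W},S}}\!\left[\log P_{\hat{W}|S}^{*}(\hat{W}|S)\right] - \mathbb{E}_{P_{\hat{W}}\otimes P_S}\!\left[\log P_{\hat{W}|S}^{*}(\hat{W}|S)\right],
\end{align*}
where the two $\log P_{\hat{W}}(\hat{W})$ terms cancel because the $\hat{W}$-marginal is $P_{\hat{W}}$ under both $P_{\hat{W},S}$ and $P_{\hat{W}}\otimes P_S$. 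Substituting the closed form, the term $\log\pi(\hat{W})$ likewise depends only on $\hat{W}$ and $\Lambda(S)$ only on $S$ (and both measures share the marginal $P_S$), so these two also drop out, leaving
\[
I_{\mathrm{SKL}}(P_{\hat{W}|S}^{*},P_S) = \beta\Big(\mathbb{E}_{P_{\hat{W}}\otimes P_S}\big[L_{E}(\hat{W},S)\big] - \mathbb{E}_{P_{\hat{W},S}}\big[L_{E}(\hat{W},S)\big]\Big).
\]

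The second step identifies the product-measure term with population risk: under $P_{\hat{W}}\otimes P_S$ the parameter $\hat{W}$ is independent of the i.i.d. sample $S$, hence $\mathbb{E}_{P_{\hat{W}}\otimes P_S}[L_{E}(\hat{W},S)] = \mathbb{E}_{P_{\hat{W}}}\big[\mathbb{E}_{P_S}[L_{E}(\hat{W},S)]\big] = \mathbb{E}_{P_{\hat{W}}}[L_{P}(\hat{W},P_Z)] = \mathbb{E}_{P_{\hat{W},S}}[L_{P}(\hat{W},P_Z)]$, the last equality because $L_P$ does not depend on $S$. Combining with the previous display and recalling the definition \eqref{equ:gen} of $\overline{gen}$ gives $I_{\mathrm{SKL}}(P_{\hat{W}|S}^{*},P_S) = \beta\,\overline{gen}(P_{\hat{W}|S}^{*},P_S)$, which is the claim.

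The computation is essentially routine; the only points requiring care are (i) bookkeeping of which measure each expectation is taken under, so that the $\log\pi$, $\log P_{\hat{W}}$, and $\Lambda(S)$ contributions genuinely cancel rather than merely appearing to, and (ii) the mild boundedness/integrability of $\ell$ needed for the partition function and all expectations above to be finite (and hence for the cancellations to be legitimate), which holds automatically in the bounded log-loss or $[0,1]$-loss settings under consideration.
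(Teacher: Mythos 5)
Your proof is correct, and it is essentially the standard argument: the paper itself does not prove this proposition but cites \cite{aminian2021exact}, whose derivation proceeds exactly as you do — expand $I_{\mathrm{SKL}}$ so the $\log P_{\hat{W}}$, $\log\pi$, and log-partition terms cancel between the joint and product measures, leaving $\beta$ times the gap between the empirical risk under $P_{\hat{W}}\otimes P_S$ and under $P_{\hat{W},S}$, and then identify the former with the expected population risk via the i.i.d.\ structure of $S$. Nothing further is needed beyond the integrability caveat you already note.
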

Notably, information risk minimization in \eqref{equ:Infor_Mini_ML} regularizes the mutual information $I(\hat{W};S)$ as a proxy of the generalization error, but the exact generalization error of the Gibbs algorithm is the symmetrized KL information, which is always larger than the mutual information. 


As discussed in Section~\ref{sec:SGLD}, we can obtain samples from the Gibbs distribution with SGLD or MALA. Thus, the population risk of the Gibbs algorithm can be approximated by   
$L_{P}(\hat{W}_{\mathrm{Gibbs}},P_{Z}) \approx L_{E}(\hat{W}_{\mathrm{Gibbs}},\vz^n) + \frac{1}{\beta}I_{\mathrm{SKL}}(P_{\hat{W}|S}^{*},P_S)$,
which motivates the following \emph{Gibbs-based} AIC:
\begin{equation}\label{equ: Gibbs_AIC}
    \mathrm{AIC^{+}} \triangleq L_{E}(\hat{W}_{\mathrm{Gibbs}},\vz^n) + \frac{1}{\beta}I_{\mathrm{SKL}}(P_{\hat{W}|S}^{*},P_S).
\end{equation}
Observe that the penalty term in Gibbs-based AIC is an information measure that captures its generalization error. By investigating the asymptotic behavior of ${I_{\mathrm{SKL}}(P_{\hat{W}|S}^{*},P_S)}$, we have the following theorem characterizes the Gibbs-based AIC in the classical asymptotic regime.


\begin{theorem} (proved in Appendix~\ref{appx:Gibbs_AIC}) \label{thm:AIC_classical}
Consider the log-loss $\ell(w,\vz) = -\log P(y|\vx;w)$, and set $\beta = n$. Under proper regularization assumptions in Appendix~\ref{appx:Gibbs_AIC}, the Gibbs-based AIC has the following form in the regime where $p$ is fixed and $n\rightarrow \infty$:
\begin{equation}\label{equ:AIC+classical}
    \mathrm{AIC^{+}} =  L_{E}(\hat{W}_{\mathrm{Gibbs}},\vz^{n})+ \frac{p}{n}.
\end{equation}
\end{theorem}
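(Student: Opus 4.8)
The plan is to compute the asymptotic behavior of the symmetrized KL information $I_{\mathrm{SKL}}(P_{\hat W|S}^*,P_S)$ for the Gibbs algorithm under log-loss with $\beta=n$, and show it converges to $p$. The key observation is that when $\beta=n$, the exponent in the Gibbs distribution \eqref{equ:Gibbs} becomes $-n L_E(w,s) = \sum_{i=1}^n \log P(y_i|\vx_i;w) = \hat L(w)$, so the Gibbs posterior is exactly the Bayesian posterior $P_{\hat W|S}^*(w|s) \propto \pi(w) e^{\hat L(w)}$. The Bernstein--von Mises theorem (this is where the "proper regularization assumptions" enter: smoothness of the model family, a well-separated interior maximizer, nonsingular Fisher information, integrability of the prior) then tells us that, conditioned on $S=s$, the posterior is asymptotically Gaussian: $\hat W \mid S=s \approx \mathcal N\!\big(\hat{\vtheta}_{\mathrm{ML}}(s),\, \tfrac{1}{n}J^{-1}\big)$ where $J$ is the Fisher information matrix at the true parameter.

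\textbf{Main steps.} First, I would rewrite $I_{\mathrm{SKL}}(P_{\hat W|S}^*,P_S) = \E_{P_{\hat W,S}}\!\big[\log\frac{P_{\hat W|S}^*}{P_{\hat W}}\big] - \E_{P_{\hat W}\otimes P_S}\!\big[\log\frac{P_{\hat W|S}^*}{P_{\hat W}}\big]$, i.e. $I_{\mathrm{SKL}} = \E_{P_S}\!\big[D(P_{\hat W|S}^*\|P_{\hat W})\big] + \E_{P_S}\!\big[D(P_{\hat W}\|P_{\hat W|S}^*)\big]$ where $P_{\hat W}$ is the marginal (the prior-averaged posterior). Second, using Bernstein--von Mises, both $P_{\hat W|S=s}^*$ and $P_{\hat W}$ concentrate: the conditional is $\mathcal N(\hat{\vtheta}_{\mathrm{ML}}(s), \tfrac1n J^{-1})$, while the marginal $P_{\hat W}$ is a mixture of such Gaussians over the (random) location $\hat{\vtheta}_{\mathrm{ML}}(S)$, which itself fluctuates on scale $1/\sqrt n$ around the true $\vtheta^*$; hence $P_{\hat W}$ is asymptotically $\mathcal N(\vtheta^*, \tfrac{2}{n}J^{-1})$ (posterior variance plus sampling variance of the MLE, both $\tfrac1n J^{-1}$). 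Third, I would compute the KL divergences between these two Gaussians. For $\mathcal N(\mu_1,\Sigma_1)$ and $\mathcal N(\mu_2,\Sigma_2)$ in $\R^p$, $D = \tfrac12\big(\Tr(\Sigma_2^{-1}\Sigma_1) - p + \log\frac{\det\Sigma_2}{\det\Sigma_1} + (\mu_2-\mu_1)^\top\Sigma_2^{-1}(\mu_2-\mu_1)\big)$. With $\Sigma_1=\tfrac1n J^{-1}$, $\Sigma_2=\tfrac2n J^{-1}$, the trace and log-det terms contribute $O(1)$ constants, and the mean-shift term, once averaged over $S$, contributes another $O(1)$ constant (since $\E[(\hat{\vtheta}_{\mathrm{ML}}-\vtheta^*)(\hat{\vtheta}_{\mathrm{ML}}-\vtheta^*)^\top]\approx \tfrac1n J^{-1}$). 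Adding $D(P_{\hat W|S}^*\|P_{\hat W}) + D(P_{\hat W}\|P_{\hat W|S}^*)$ the $\Sigma$-mismatch terms are $O(1)$ but the dominant contribution must sum to $p$ — the cleanest route is to note that $I_{\mathrm{SKL}} = \beta\cdot\overline{gen}$ by Proposition~\ref{prop:Gibbs_gen} and independently that the Gibbs generalization error under log-loss with $\beta=n$ matches the classical AIC penalty $p/n$ to leading order, so $I_{\mathrm{SKL}}/\beta \to p/n$. Finally, substituting into \eqref{equ: Gibbs_AIC} and noting $L_E(\hat W_{\mathrm{Gibbs}},\vz^n)$ is the stated first term gives \eqref{equ:AIC+classical}.

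\textbf{Main obstacle.} The delicate part is making the Bernstein--von Mises argument and the subsequent Gaussian approximations precise enough to control the error in $I_{\mathrm{SKL}}$ to $o(1)$ — in particular, KL divergence is not a bounded functional, so the total-variation or weak-convergence guarantees from BvM do not immediately transfer to KL, and one needs uniform integrability / tail control (provided by the "proper regularization assumptions") on the log-likelihood ratio, plus care that the remainder terms in the Laplace/Edgeworth expansions of both $P_{\hat W|S}^*$ and $P_{\hat W}$ vanish after taking the $\E_{P_S}$ expectation. A secondary subtlety is justifying that the marginal $P_{\hat W}$ genuinely has the doubled covariance $\tfrac2n J^{-1}$ to leading order, i.e. that the posterior-center fluctuation and the within-posterior spread are asymptotically independent and additive in variance; this follows from the joint asymptotic normality of $(\hat W - \hat{\vtheta}_{\mathrm{ML}}(S),\ \hat{\vtheta}_{\mathrm{ML}}(S) - \vtheta^*)$ but should be stated carefully. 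Given the excerpt defers these to Appendix~\ref{appx:Gibbs_AIC}, in the main text I would only sketch the BvM reduction and the Gaussian KL computation, citing the appendix for the regularity conditions.
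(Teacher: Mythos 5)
Your proposal follows essentially the same route as the paper's proof: with log-loss and $\beta=n$ the Gibbs posterior is the Bayes posterior, Bernstein--von Mises plus asymptotic normality of the MLE reduce the problem to a Gaussian channel, and the symmetrized KL of that channel gives the penalty $p/n$ (the paper cites Palomar--Verd\'u's formula $I_{\mathrm{SKL}}=\Tr(I(\vw^*)J(\vw^*)^{-1})=p$ in the well-specified case, whereas you compute the two Gaussian KL terms directly, which indeed sum to exactly $p$: mutual information $\tfrac{p}{2}\log 2$ plus lautum term $p\bigl(1-\tfrac{\log 2}{2}\bigr)$ using the marginal covariance $\tfrac{2}{n}J^{-1}$). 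The only blemish is your ``cleanest route'' fallback asserting that the Gibbs generalization error ``independently'' matches the classical $p/n$ penalty---that is circular, since it is precisely what is being proved, and it is unnecessary given that your explicit Gaussian calculation already closes the argument.
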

Evidently, our information-theoretic analysis has the same AIC penalty term for the $\mathrm{AIC^{+}}$ in the classical regime, which suggests that the generalization error of the Gibbs algorithm (SGLD or MALA) has the same order of $p/n$ as that of the MLE (SGD) in this regime.



\subsection{Gibbs-based BICs}


The Gibbs-based BIC is constructed by computing the marginal likelihood $m(\vz^n)$ using the information risk minimization framework. As such, it differs from the standard approach in classical (MLE-based) BIC, as no Laplace  approximation is needed.

We now show that the minimum value achieved by the Gibbs algorithm with log-loss in the information risk minimization is the negative log-marginal likelihood.
\begin{proposition}\label{prop:Gibbs_marginal}(proved in Appendix~\ref{appx:Gibbs_BIC})
For the Gibbs algorithm $P^*_{\hat{W}|S}$ defined in \eqref{equ:Gibbs}, if we adopt the log-loss function $\ell(w,\vz) = -\log P(y|\vx;w)$, and set $\beta = n$,  the marginal likelihood is 
\begin{align}
    &-\frac{1}{n} \log m(\vz^n)\nonumber \\ 
    &=\mathbb{E}_{P^*_{\hat{W}|S=\vz^n}}\big[L_{E}(\hat{W},\vz^n)\big] + \frac{1}{n}D(P^*_{\hat{W}|S=\vz^n}\| \pi)\\
    &=\mathbb{E}_{\pi}\big[L_{E}(\hat{W},\vz^n)\big] - \frac{1}{n}D(\pi \| P^*_{\hat{W}|S=\vz^n}).\nonumber
\end{align}
\end{proposition}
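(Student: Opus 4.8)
The plan is to recognize Proposition~\ref{prop:Gibbs_marginal} as a direct specialization of the two preceding lemmas (Lemma~\ref{lemma:Infor_mini} and Lemma~\ref{lemma:Infor_prior}) to the log-loss with $\beta = n$. The crucial observation is that under log-loss, $n L_E(w,\vz^n) = -\sum_{i=1}^n \log P(y_i|\vx_i;w) = -\log P(y^n|\vx^n;w)$, so that $e^{-\beta L_E(w,\vz^n)} = e^{-n L_E(w,\vz^n)} = P(y^n|\vx^n;w)$. Consequently the partition function becomes $\mathbb{E}_\pi[e^{-\beta L_E(W,\vz^n)}] = \int P(y^n|\vx^n;w)\,\pi(w)\,dw = m(\vz^n)$, i.e., the partition function \emph{is} the marginal likelihood. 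This is the single identity that drives everything.

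First I would fix the data realization $S = \vz^n$ (so the outer expectation over $P_S$ in the lemmas collapses to evaluation at $\vz^n$) and substitute $\beta = n$ and the log-loss into the Gibbs kernel~\eqref{equ:Gibbs}, noting that $P^*_{\hat{W}|S=\vz^n}(w) = \pi(w) P(y^n|\vx^n;w)/m(\vz^n)$. Next I would invoke Lemma~\ref{lemma:Infor_mini}: its right-hand side, $-\frac{1}{\beta}\log\mathbb{E}_\pi[e^{-\beta L_E(W,\vz^n)}]$, equals $-\frac{1}{n}\log m(\vz^n)$ by the identity above, while its left-hand side is exactly $\mathbb{E}_{P^*_{\hat{W}|S=\vz^n}}[L_E(\hat{W},\vz^n)] + \frac{1}{n}D(P^*_{\hat{W}|S=\vz^n}\|\pi)$. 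That gives the first displayed equality. Then I would invoke Lemma~\ref{lemma:Infor_prior}, whose right-hand side is the same quantity $-\frac{1}{n}\log m(\vz^n)$ and whose left-hand side is $\mathbb{E}_\pi[L_E(W,\vz^n)] - \frac{1}{n}D(\pi\|P^*_{\hat{W}|S=\vz^n})$, yielding the second equality. A brief sanity check that the two decompositions are mutually consistent (they must be, since both equal the same log-partition-function expression) can be included, but no independent computation is required.

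Honestly, there is no serious obstacle here: the statement is essentially a bookkeeping exercise once the partition-function-equals-marginal-likelihood identity is spelled out. The only point that merits care is making sure the conditioning is handled cleanly—Lemmas~\ref{lemma:Infor_mini} and~\ref{lemma:Infor_prior} are stated with an expectation over $P_S$ and a conditional relative entropy $D(\cdot\|\cdot|P_S)$, and I want to argue that restricting to a single realization $S = \vz^n$ (rather than averaging) is legitimate; this follows because the identity in both lemmas holds pointwise in $s$ before taking the $P_S$-expectation, which is clear from the explicit form of the Gibbs minimizer and a one-line check of the KL algebra. I would state this pointwise-in-$s$ version explicitly as the form actually being used. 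The rest is substitution.
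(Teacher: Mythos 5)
Your proposal is correct and follows essentially the same route as the paper's own proof: under log-loss with $\beta=n$ the Gibbs kernel is the Bayes posterior and the partition function is the marginal likelihood $m(\vz^n)$, after which the first equality is exactly the pointwise-in-$s$ KL algebra of Lemma~\ref{lemma:Infor_mini} (which the paper simply carries out by hand) and the second follows from Lemma~\ref{lemma:Infor_prior}. Your explicit remark that the lemmas hold pointwise in $s$, not only after averaging over $P_S$, is a legitimate and even slightly more careful treatment than the appendix, which also leaves the second equality to Lemma~\ref{lemma:Infor_prior} implicitly.
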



Motivated by Proposition \ref{prop:Gibbs_marginal}, we propose two different Gibbs-based BICs to approximate the marginal likelihood:  
\begin{equation}  \label{equ: Gibbs_BIC}  
    \mathrm{BIC^{+}} \triangleq L_{E}(\hat{W}_{\mathrm{Gibbs}},\vz^n) + \frac{1}{n}D(P^*_{\hat{W}|S=\vz^n}\|\pi),
\end{equation}
\begin{equation}  \label{equ: Gibbs_BIC-}  
    \mathrm{BIC^{-}} \triangleq \mathbb{E}_{\pi}\big[L_{E}(W,\vz^n)\big] - \frac{1}{n}D(\pi\|P^*_{\hat{W}|S=\vz^n}).
\end{equation}
Interestingly,  given that the empirical risk term is evaluated under the predetermined prior distribution, it is unnecessary to sample from the Gibbs posterior to evaluate $\mathrm{BIC^{-}}$. The $\mathrm{BIC^{-}}$ can be directly obtained by computing the second KL divergence term depending on $ P^*_{\hat{W}|S=\vz^n}$.

To compare $\mathrm{BIC^{+}}$ with the classical BIC, it suffices to investigate the asymptotic behavior of the KL divergence between the Gibbs posterior distribution and the prior as $n\to \infty$. The final expression of the Gibbs-based $\mathrm{BIC^{+}}$ is formally stated in the following Theorem. 



\begin{theorem}(proved in Appendix~\ref{appx:Gibbs_BIC_classical}) \label{thm:BIC_Gibbs_classical}
Under proper regularization assumptions in Appendix~\ref{appx:Gibbs_BIC_classical}, the Gibbs-based $\mathrm{BIC^{+}}$ has the following form in the classical regime where $p$ is fixed and $n\rightarrow \infty$,
\begin{equation}\label{equ:BIC+classical}
    \mathrm{BIC^{+}} \triangleq  L_{E}(\hat{W}_{\mathrm{Gibbs}},\vz^{n})+ \frac{p}{2n}\log n.
\end{equation}
\end{theorem}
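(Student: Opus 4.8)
The starting point is Proposition~\ref{prop:Gibbs_marginal}, which already gives the exact identity $\mathrm{BIC^{+}} = L_{E}(\hat{W}_{\mathrm{Gibbs}},\vz^n) + \tfrac{1}{n}D(P^*_{\hat{W}|S=\vz^n}\|\pi)$, so the whole task reduces to the asymptotics of the single term $D(P^*_{\hat{W}|S=\vz^n}\|\pi)$ as $n\to\infty$ with $p$ fixed; the goal is to show it equals $\tfrac{p}{2}\log n + O(1)$, after which dividing by $n$ and discarding the $O(1/n)$ remainder — exactly as the $O(1)$ term is discarded in the classical BIC derivation of Appendix~\ref{appx:BIC} — yields~\eqref{equ:BIC+classical}. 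The key observation is that with $\beta=n$ and log-loss one has $nL_{E}(w,\vz^n) = -\sum_{i=1}^{n}\log P(y_i|\vx_i;w)$, so the Gibbs posterior~\eqref{equ:Gibbs} is literally the Bayesian posterior $P^*_{\hat{W}|S=\vz^n}(w) \propto \pi(w)P(y^n|\vx^n;w)$, which lets me bring the Bernstein--von Mises machinery to bear.

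The plan for the KL term is a differential-entropy decomposition: write $D(P^*_{\hat{W}|S=\vz^n}\|\pi) = -h(P^*_{\hat{W}|S=\vz^n}) - \E_{P^*_{\hat{W}|S=\vz^n}}[\log\pi(\hat{W})]$, where $h$ is differential entropy. For the first piece I invoke the regularity assumptions in Appendix~\ref{appx:Gibbs_BIC_classical} (identifiability, smoothness of the likelihood, a nonsingular Fisher information $J(\vtheta^\star)$ at the population minimizer $\vtheta^\star$, and enough tail control on the prior and likelihood to localize the posterior), which put the posterior in the local asymptotic normality regime: $P^*_{\hat{W}|S=\vz^n}$ is, up to vanishing total-variation error and negligible tail mass, the Gaussian $\mathcal{N}(\hat{\vtheta}_{\mathrm{ML}}, \tfrac{1}{n}J(\vtheta^\star)^{-1})$. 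The differential entropy of this Gaussian is $\tfrac{p}{2}\log(2\pi e) - \tfrac{p}{2}\log n - \tfrac12\log\det J(\vtheta^\star)$, so $-h(P^*_{\hat{W}|S=\vz^n}) = \tfrac{p}{2}\log n + O(1)$. For the second piece, posterior consistency (concentration of $P^*_{\hat{W}|S=\vz^n}$ at $\vtheta^\star$) together with continuity and positivity of $\pi$ at $\vtheta^\star$, plus an integrability condition to pass the limit under the expectation, gives $-\E_{P^*_{\hat{W}|S=\vz^n}}[\log\pi(\hat{W})] \to -\log\pi(\vtheta^\star) = O(1)$. Adding the two pieces gives $D(P^*_{\hat{W}|S=\vz^n}\|\pi) = \tfrac{p}{2}\log n + O(1)$, which is exactly what is needed.

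As a consistency check I would also verify the result through the marginal likelihood directly: combining the identity $-\tfrac1n\log m(\vz^n) = \E_{P^*_{\hat{W}|S=\vz^n}}[L_E(\hat{W},\vz^n)] + \tfrac1n D(P^*_{\hat{W}|S=\vz^n}\|\pi)$ from Proposition~\ref{prop:Gibbs_marginal} with the classical Laplace expansion $\log m(\vz^n) = \hat{L}(\hat{\vtheta}_{\mathrm{ML}}) - \tfrac{p}{2}\log n + O(1)$ (Appendix~\ref{appx:BIC}) and the Taylor bound $\E_{P^*_{\hat{W}|S=\vz^n}}[L_E(\hat{W},\vz^n)] = L_E(\hat{\vtheta}_{\mathrm{ML}},\vz^n) + O(1/n)$ — valid because $\nabla L_E(\hat{\vtheta}_{\mathrm{ML}},\vz^n)=0$ and the posterior has $O(1/n)$ second moments around $\hat{\vtheta}_{\mathrm{ML}}$ — reproduces the same $\tfrac{p}{2}\log n$ penalty; the same second-moment bound also shows $L_E(\hat{W}_{\mathrm{Gibbs}},\vz^n) = L_E(\hat{\vtheta}_{\mathrm{ML}},\vz^n) + O_P(1/n)$, so which estimator appears in the leading empirical-risk term is immaterial at this order.

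I expect the main obstacle to be making the Bernstein--von Mises step quantitative at the level of \emph{differential entropy} rather than total variation: $h(P^*_{\hat{W}|S=\vz^n})$ is a global functional, so beyond the local Gaussian approximation I must control the posterior tails — showing that outside a shrinking neighborhood of $\vtheta^\star$ the posterior mass, and its contribution to the entropy integral, is exponentially small — and I need a uniform-integrability / dominated-convergence argument to handle $\E_{P^*_{\hat{W}|S=\vz^n}}[\log\pi(\hat{W})]$ when $\log\pi$ is unbounded. These are precisely the places where the ``proper regularization assumptions'' of Appendix~\ref{appx:Gibbs_BIC_classical} must do their work.
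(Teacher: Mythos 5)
Your proposal is correct and follows essentially the same route as the paper's Appendix~\ref{appx:Gibbs_BIC_classical} proof: the same decomposition of $D(P^*_{\hat{W}|S=\vz^n}\|\pi)$ into negative differential entropy plus the expected log-prior, the same Bernstein--von Mises Gaussian approximation $\mathcal{N}(\hat{\vtheta}_{\mathrm{ML}},\tfrac{1}{n}J(\vtheta^*)^{-1})$ yielding the $\tfrac{p}{2}\log n$ term, and the same discarding of the $O(1)$ remainders. Your added caveat about upgrading total-variation convergence to convergence of the entropy functional, and the Laplace-based consistency check, go beyond what the paper spells out but do not change the argument.
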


As expected, we have the same BIC penalty term for the $\mathrm{BIC^{+}}$ in the classical regime. The experimental results in Appendix~\ref{appx:exp} show that the proposed Gibbs-based AIC and BIC are comparable to their classic counterparts. In the over-parameterized regimes, they are not, as we now develop. 



\section{BICs for Over-Parameterized RF Model}\label{sec:rbic}

As shown in Figure \ref{fig:over_loss} (right), the classical AIC and BIC fail to capture the double-descent trend exhibited by the test MSE of the RF model. This is due to the fact that the generalization error and marginal likelihood have different behaviors in the over-parameterized regime, which cannot be characterized by the classical
asymptotic analysis. The classical analyses heavily rely on the asymptotic normality of MLE and Laplace approximation under certain regularization assumptions, which ignores the prior distribution as $n\to \infty$. Unfortunately, none of these properties hold in the over-parameterized regime where $p \gg n$, as there exist an infinite number of possible model parameters that could interpolate $n$ samples perfectly, making the classical AIC and BIC ill-defined. 

However, the Gibbs-based $\mathrm{AIC^{+}}$ in \eqref{equ: Gibbs_AIC}, $\mathrm{BIC^{+}}$ in \eqref{equ: Gibbs_BIC}, and $\mathrm{BIC^{-}}$ in \eqref{equ: Gibbs_BIC-} defined using different information measures can be generalized to over-parameterized models, as Proposition~\ref{prop:Gibbs_gen} and~\ref{prop:Gibbs_marginal} hold regardless of the values of $p$ and $n$. Since $\mathrm{AIC^{+}}$ mainly captures the generalization error of the Gibbs algorithm, which can be estimated using validation data in practice, our focus lies in extending the analysis of $\mathrm{BIC^{+}}$ and $\mathrm{BIC^{-}}$ to approximate the marginal likelihood in the over-parameterized regime. 



Owing to the complex nature of fitting in the over-parameterized regime, we do not pursue a general asymptotic formula that applies to all model architectures, as in the classic regime. Instead, we refine the Gibbs-based BIC analysis to this regime for the random feature (RF) model.



\subsection{Random Feature Model}
\label{sec:model}
The RF model \cite{rahimi2008random} takes the form of a two-layer neural network with fixed random weights in the first layer.
The output of RF model with input data $\vx \in \mathbb{R}^d$ is
\begin{equation}\label{equ:RF_model}
    {g}(\vx) \triangleq \sum_{j=1}^p f \Big(\frac{ \langle \vx, \mF_j \rangle}{\sqrt{d}} \Big) \vw_j
    = f\Big(\frac{\vx^\top \mF}{\sqrt{d}}\Big) \vw,
\end{equation}
where 
 $\vw \in \mathbb{R}^p$ denotes the weights of the model. Moreover,
$\mF_j \in \mathbb{R}^d$ denotes the $j$\/th random feature vector,
which is the $j$\/th column of the random feature matrix $\mF \in \mathbb{R}^{d \times p}$
whose entries are drawn i.i.d.\ from $\mathcal{N}(0, 1)$. Finally,
$f(\cdot)$ is a point-wise activation function. In our setting, there are $n$ training samples
$\vz^n = \{(\vx_i,y_i)\}_{i=1}^n$,
and the $\vx_i$ are drawn i.i.d.\ from $\mathcal{N}(0, \mI_d)$.

The parametric distribution family induced by the random feature model takes the form
\begin{align}
    &P(y^n|\vx^n;\vw) \\
    &=\frac{1}{(2\pi \sigma^2)^{n/2} } \exp \Big(-\frac{1}{2\sigma^2} \sum_{i=1}^n\big(y-f\big(\frac{  \vx_i^\top \mF}{\sqrt{d}}\big) \vw\big)^2\Big),\nonumber
\end{align}
with a fixed random feature matrix $\mF$,  and a Gaussian prior distribution $\vw \sim \mathcal{N}(0, \frac{\sigma^2}{\lambda n}\mI_p)$. 
The weights of the RF model $\vw$ 
can be trained using the
regularized log-loss
\begin{align}\label{equ:RF_loss}
        \mathcal{L}(\vw) &= \frac{1}{2\sigma^{2}}\|\mY-\mB \vw \|_2^2  +\frac{n}{2} \log (2\pi \sigma^{2}) +\frac{\lambda n \|\vw\|_2^2}{2\sigma^{2}},\nonumber \\ \quad 
        &\text{ where } \mB \triangleq f(\mX \mF/\sqrt{d}) \in\mathbb{R}^{n \times p},
\end{align}
and 
we collect the training data in a matrix
$\mX \in \mathbb{R}^{n \times d}$
and a vector $\mY \in \mathbb{R}^n$ to simplify the notation.

As discussed in \cite{mei2022generalization,d2020triple}, a significant benefit of 
using the random feature model is that, unlike in standard linear regression, the dimensionality of the input data $d$ is not entangled with the number of parameters $p$.



\begin{figure*}[!t]
    \centering
    \includegraphics[width=0.32\textwidth]{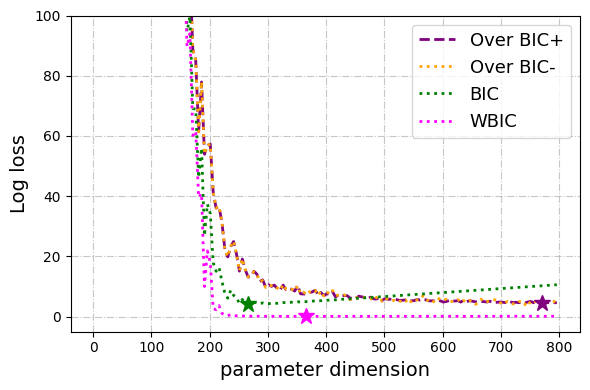}\hspace*{1em}
    \includegraphics[width=0.32\textwidth]{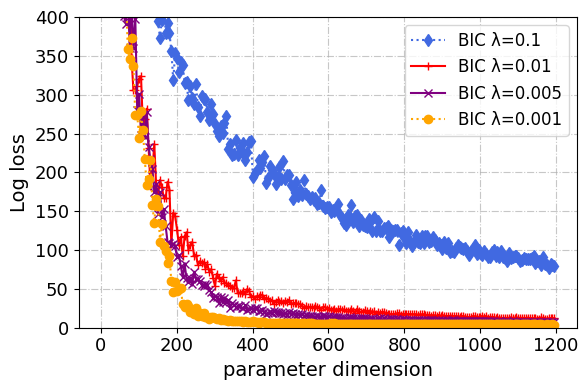}\hspace*{1em}
    \includegraphics[width=0.32\textwidth]{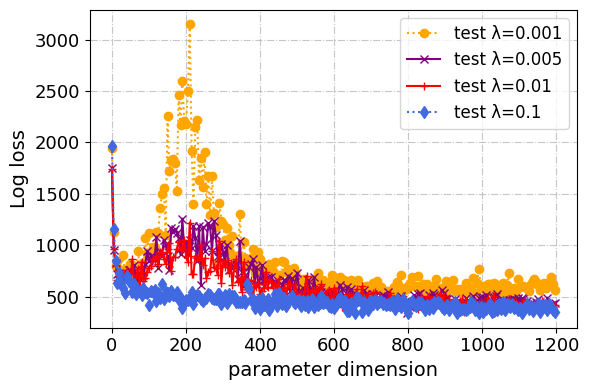}
    \vspace{-1em}
    \caption{A comparison between different $\mathrm{BIC}$s in over-parameterized RF model when $\lambda=0.001$ (\textbf{left}); A comparison between $\mathrm{BIC^{+}}$ (\textbf{middle}) and population risk (\textbf{right}) with varying $\lambda$.}
    \label{fig:BIC_comparasion}
    \vspace{-0.5em}
\end{figure*}

\subsection{Gibbs-based BICs for Over-Parameterized RF Model}
To generalize Gibbs-based BICs to the over-parameterized RF model, it suffices to focus on the second KL-divergence term in~\ref{equ: Gibbs_BIC} and~\ref{equ: Gibbs_BIC-}. In the random feature model, it can be shown (see Appendix~\ref{app:RF_posterior} for details) that the Gibbs algorithm reduces to the Gaussian posterior distribution $\smash{P_{\hat{W}|S}^{*} \sim \mathcal{N}(\hat{W}_{\lambda},\mSigma_w)}$, with the mean ${\hat{W}_{\lambda} = (\lambda n\mI_p+\mB^\top \mB)^{-1} \mB^\top \mY}$, and covariance matrix $\smash{\mSigma_w =\sigma^{2}(\lambda n\mI_p+\mB^\top \mB)^{-1}}$.

Thus, the KL-divergence between the Gibbs posterior distribution and prior $\mathcal{N}(0, \frac{\sigma^2}{\lambda n}\mI_p)$ is given by
\begin{equation}\label{equ:kl_bic}
   D(P^*_{\hat{W}|S=\vz^n}\|\pi)= 
   \frac{1}{2}\Big[\frac{\lambda n}{\sigma^{2}}\|\hat{W}_{\lambda} \|_2^2 +\log \frac{\det(\frac{\sigma^{2}}{\lambda n} \mI_p)}{\det(\mSigma_w)} + \mathrm{tr}(\frac{\lambda n}{\sigma^{2}} \mSigma_w)  - p \Big].  \nonumber
\end{equation}
And the the KL-divergence between the prior $\mathcal{N}(0, \frac{\sigma^2}{\lambda n}\mI_p)$ and Gibbs distribution is
given by
\begin{equation}\label{equ:kl_bic-}
   D(\pi \| P^*_{\hat{W}|S=\vz^n}) = 
   \frac{1}{2}\Big[\hat{W}_{\lambda}^{\top}(\mSigma_w)^{-1}\hat{W}_{\lambda}  -\log \frac{\det(\frac{\sigma^{2}}{\lambda n} \mI_p)}{\det(\mSigma_w)}+ \mathrm{tr}(\frac{\sigma^{2}}{\lambda n}\mSigma_w^{-1}) - p \Big]. \nonumber
\end{equation}
To obtain a convenient expression for the remaining determinant and trace terms, we first impose restrictions on the activation function $f(\cdot)$. Therefore, these two terms can be characterized using random matrix theory by studying the eigenvalues of $\mSigma \triangleq \mB^\top \mB/(\lambda n) + \mI_p$ in the over-parameterized regime.
In particular, for activation functions $f(\cdot)$
that satisfy conditions
\begin{align}
    \label{equ:act-func-conditions}
\mathbb{E}[f(\varepsilon)]=0,\quad  \mathbb{E}[f(\varepsilon)^2]&=1, \quad
        \mathbb{E}[f'(\varepsilon)]=0, \quad \nonumber \\
\big|\mathbb{E}[f(\varepsilon)^k]\big|&<\infty,\quad \text{ for $k>1$},
\end{align}
where  $\varepsilon \sim \mathcal{N}(0,1)$,
the following theorem characterizes the KL divergence term in the over-parameterized RF model. 



\begin{theorem}\label{thm:over_BIC}
For activation functions $f(\cdot)$ satisfying
the conditions in \eqref{equ:act-func-conditions},
as $n,d,p \to \infty$ with $p/d \to r_1$, $n/d \to r_2$, and $r_1/r_2 =r$, where $r_1,r_2 \in (0,\infty)$, we have
\begin{align}
&\frac{1}{n}D(P^*_{\hat{W}|S=\vz^n}\|\pi)  \nonumber\\
&\to\frac{\lambda}{2\sigma^2} \|\hat{W}_{\lambda}\|_2^2  -\frac{ \lambda }{8} \mathcal{F}(\frac{1}{\lambda},r)+ \frac{1}{2}{V}(1/\lambda, r)    
\end{align}
almost surely, where
\begin{align}
  V(\gamma, r) \triangleq &\ r \log \Big(1+ \gamma -\frac{1}{4} \mathcal{F}(\gamma,r) \Big) -\frac{1}{4\gamma}\mathcal{F}(\gamma,r)\nonumber\\
  &+ \log \Big(1+ \gamma r -\frac{1}{4} \mathcal{F}(\gamma,r) \Big),
\end{align}
\begin{align*}
    \mathcal{F}(\gamma,r) \triangleq
    \Big(\sqrt{\gamma(1+\sqrt{r})^2+1} - \sqrt{\gamma(1-\sqrt{r})^2+1}\Big)^2.
\end{align*}
\end{theorem}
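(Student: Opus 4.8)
The plan is to reduce the left-hand side to a bounded linear eigenvalue statistic of the random-feature Gram matrix and then invoke nonlinear random matrix theory together with the Marchenko--Pastur transform formulas. First I would rewrite the divergence using $\mA \triangleq \mB^\top\mB/n$ and $\mSigma \triangleq \mA/\lambda + \mI_p$: since $\lambda n\mI_p + \mB^\top\mB = \lambda n\mSigma$ and $\mSigma_w = \tfrac{\sigma^2}{\lambda n}\mSigma^{-1}$, the determinant ratio in the displayed expression for $D(P^*_{\hat{W}|S=\vz^n}\|\pi)$ equals $\det\mSigma$ and $\mathrm{tr}(\tfrac{\lambda n}{\sigma^2}\mSigma_w) = \mathrm{tr}(\mSigma^{-1})$, so dividing by $n$ yields
\[
\tfrac1n D(P^*_{\hat{W}|S=\vz^n}\|\pi) = \tfrac{\lambda}{2\sigma^2}\|\hat{W}_{\lambda}\|_2^2 + \tfrac1{2n}\log\det\mSigma + \tfrac1{2n}\bigl(\mathrm{tr}(\mSigma^{-1})-p\bigr).
\]
The first term is carried along unchanged; the last two equal $\tfrac1{2n}\sum_{i=1}^p\varphi(\mu_i)$ with $\varphi(x) = \log(1+x/\lambda)$ and $\varphi(x) = (1+x/\lambda)^{-1}-1$ respectively, $\mu_1,\dots,\mu_p$ being the eigenvalues of $\mA$.

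The decisive ingredient is the limiting spectrum of $\mA$. Under the conditions \eqref{equ:act-func-conditions} --- in particular $\mathbb{E}[f(\varepsilon)] = 0$, $\mathbb{E}[f(\varepsilon)^2] = 1$, and the vanishing of the linear Hermite coefficient $\mathbb{E}[\varepsilon f(\varepsilon)] = \mathbb{E}[f'(\varepsilon)] = 0$ --- the empirical spectral distribution of $\mA = f(\mX\mF/\sqrt d)^\top f(\mX\mF/\sqrt d)/n$ converges weakly, almost surely, to the Marchenko--Pastur law $\mu_r$ with shape $r = r_1/r_2 = \lim p/n$ (continuous part on $[(1-\sqrt r)^2,(1+\sqrt r)^2]$, mean $1$, and an atom of mass $1-1/r$ at $0$ when $r>1$). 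Intuitively, once the constant and linear Hermite components of $f$ are killed, the Gaussian equivalent of $\mB$ is a pure i.i.d.\ noise matrix of variance $\mathbb{E}[f(\varepsilon)^2] = 1$, so the input dimension $d$ washes out of the limit; this is the nonlinear random matrix theory of Pennington--Worah and Benigni--P\'ech\'e (see also Louart--Liao--Couillet), for which the finite higher moments in \eqref{equ:act-func-conditions} supply the universality hypothesis. The same results give almost-sure convergence of the extreme eigenvalues of $\mA$ to the edges of $\mathrm{supp}(\mu_r)$, so for both functions $\varphi$ above --- each continuous on a neighbourhood of $[0,(1+\sqrt r)^2]$ with $\varphi(0) = 0$, so that the $p-n$ zero eigenvalues arising when $r>1$ and the atom of $\mu_r$ at $0$ both contribute nothing --- one gets $\tfrac1p\sum_{i=1}^p\varphi(\mu_i)\to\int\varphi\,d\mu_r$ almost surely.

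Combining this with $p/n\to r$ gives, almost surely,
\[
\tfrac1{2n}\log\det\mSigma \to \tfrac r2\!\int\!\log\!\bigl(1+\tfrac x\lambda\bigr)d\mu_r(x), \qquad \tfrac1{2n}\bigl(\mathrm{tr}(\mSigma^{-1})-p\bigr) \to -\tfrac r2\!\int\!\tfrac{x/\lambda}{1+x/\lambda}\,d\mu_r(x).
\]
It then suffices to plug in the classical Shannon- and $\eta$-transform formulas for the Marchenko--Pastur law: with $\gamma = 1/\lambda$, one has $r\!\int\!\log(1+\gamma x)\,d\mu_r(x) = V(\gamma,r)$ and $\int\!\tfrac{\gamma x}{1+\gamma x}\,d\mu_r(x) = 1-\eta_{\mu_r}(\gamma) = \tfrac1{4r\gamma}\mathcal{F}(\gamma,r)$, so the second limit equals $-\tfrac1{8\gamma}\mathcal{F}(\gamma,r) = -\tfrac\lambda8\mathcal{F}(1/\lambda,r)$. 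Both identities are routine to verify from the explicit formulas --- for instance by checking $\partial_\gamma V(\gamma,r) = \tfrac1{4\gamma^2}\mathcal{F}(\gamma,r)$ and $V(0,r)=0$ --- after differentiating $\mathcal{F}$. Adding back the unchanged term $\tfrac{\lambda}{2\sigma^2}\|\hat{W}_{\lambda}\|_2^2$ produces the asserted limit, and the same computation applied to $D(\pi\|P^*_{\hat{W}|S=\vz^n})$ handles the companion statement used for $\mathrm{BIC}^{-}$.

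The main obstacle is the second step: establishing that the conditions \eqref{equ:act-func-conditions} really collapse the spectrum of the \emph{nonlinear} random-feature Gram matrix $\mA$ to an ordinary Marchenko--Pastur law (with the input dimension $d$ eliminated), and controlling the top of the spectrum so that the $\log\det$ term --- a priori an unbounded linear statistic --- becomes a bounded-continuous-test-function statistic against $\mu_r$. Granting that, what remains is the algebra of the first step and the standard transform identities of the third.
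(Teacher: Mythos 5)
Your proposal follows essentially the same route as the paper's proof: the identical Gaussian-KL decomposition into the $\ell_2$, log-determinant, and trace terms, convergence of the spectrum of $\mB^\top\mB/n$ to the Marchenko--Pastur law with shape $r$ via the nonlinear random matrix results of Pennington--Worah under the conditions \eqref{equ:act-func-conditions}, and then the Shannon- and $\eta$-transform identities to obtain $V(1/\lambda,r)$ and $-\tfrac{\lambda}{8}\mathcal{F}(1/\lambda,r)$. Your added remark that one must control the spectral edge so the (a priori unbounded) $\log\det$ linear statistic converges is a point the paper passes over implicitly, but it does not change the argument.
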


\begin{proof}[Sketch of Proof]
The proof is based on the results from \cite{pennington2017nonlinear},
which shows that the distribution of the eigenvalues of the random matrix $\mB^\top\mB/n$
converges to the Marchenko-Pastur distribution with shape parameter $r$
(a well-studied distribution in random matrix theory~\cite{tulino2004random}). 
The detailed proof is provided in Appendix~\ref{app:RF}.
\end{proof}

\begin{remark}
An example of an activation function that
satisfies all the assumptions we made in  \eqref{equ:act-func-conditions}
is $f(x) = (x^2-1)/\sqrt{2}$.
More examples of such activation functions can be found in \cite{pennington2017nonlinear}. We further note that the assumption  $\mathbb{E}[f'(\varepsilon)]=0$ on the activation function is used only to obtain a simple closed-form for the KL divergence, a more general result by considering the Stieltjes transform of $\mB^\top\mB/n$ for other activation functions is provided in Appendix~\ref{app:RF}.
\end{remark}

Theorem~\ref{thm:over_BIC} motivates us to define the following Gibbs-based BICs for the over-parameterized RF model to approximate the marginal likelihood, 
\begin{align}\label{equ:BICover}
\mathrm{BIC^{+}}\triangleq\ & L_{E}(\hat{W}_{\mathrm{Gibbs}},\vz^{n}) +\underbrace{\frac{\lambda}{2\sigma^2} \|\hat{W}_{\mathrm{\lambda}}\|_2^2}_\text{$\ell_2$ term}  \\
&\underbrace{-\frac{ \lambda }{8} \mathcal{F}(\frac{1}{\lambda},r)+ \frac{1}{2}{V}(1/\lambda, r)}_\text{covariance term}.  \nonumber \\ 
\mathrm{BIC^{-}}\triangleq\ \label{equ:BIC-over}
&  \mathbb{E}_{\pi}\big[L_{E}(W,\vz^n)\big] -  \frac{1}{2n}\hat{W}_{\lambda}^{\top}(\mSigma_w)^{-1}\hat{W}_{\lambda}  \\
&- \frac{1}{2n}\mathrm{tr}( \mI_p+\frac{1}{\lambda n}\mB^\top \mB\big)+ \frac{1}{2}{V}(1/\lambda, r)+\frac{p}{2n}.    \nonumber
\end{align}

Depending on the specific sampling method employed, as detailed in Section~\ref{sec:SGLD}, the term $L_{E}(\hat{W}_{\mathrm{Gibbs}},\vz^{n})$ can be estimated with either $L_{E}(\hat{W}_{\mathrm{SGLD}},\vz^{n})$ or $L_{E}(\hat{W}_{\mathrm{MALA}},\vz^{n})$. When $\sigma$ is comparatively small and  $\lambda$ is comparatively large,  $\|\hat{W}_{\mathrm{\lambda}}\|_2^2$ can be substituted by $ \|\hat{W}_{\mathrm{SGLD}}\|_2^2$ or $ \|\hat{W}_{\mathrm{MALA}}\|_2^2$, as the posterior is primarily driven by the mean.
The penalty term of $\mathrm{BIC^{+}}$ consists of the $\ell_2$ norm of the learned weights, and two other terms capture the log determinant and trace in the over-parameterized regime, which will be referred to as the covariance term altogether in the next section. 

\section{Experiment and Discussion}\label{sec:exp_over}
\begin{figure*}[!t]
    \centering
    \includegraphics[width=0.4\textwidth]{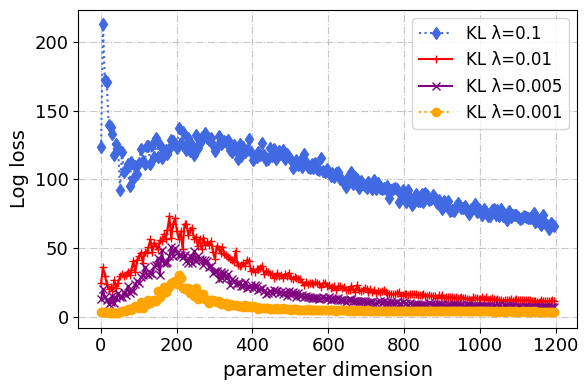
    }\hspace*{2em}
    \includegraphics[width=0.4\textwidth]{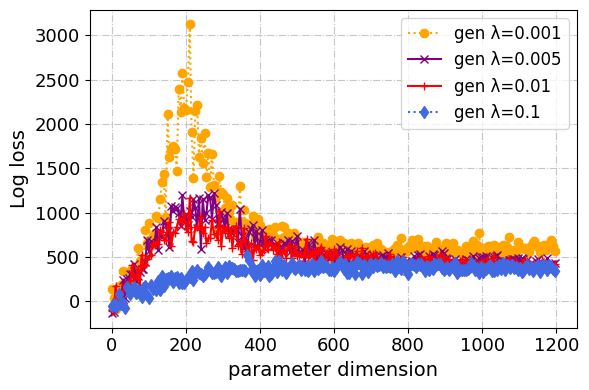}
    \vspace{-0.5em}
    \caption{A comparison between the KL-divergence  term in $\mathrm{BIC}^+$ (\textbf{left}) and the generalization error term in $\mathrm{AIC}^+$ (\textbf{right}) with varying $\lambda$.}
    \label{fig:KL_ISKL}
\end{figure*}

\begin{figure*}[!t]
    \centering
    \includegraphics[width=0.4\textwidth]{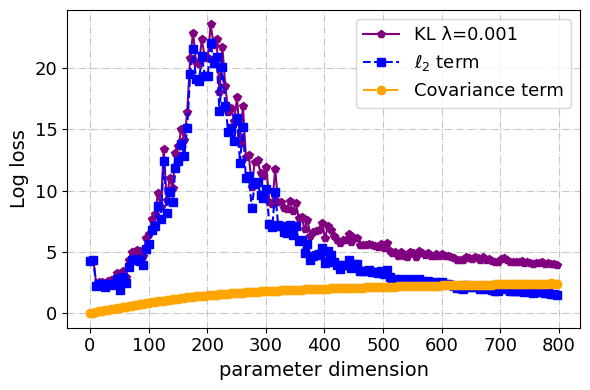}\hspace*{2em}
    \includegraphics[width=0.4\textwidth]{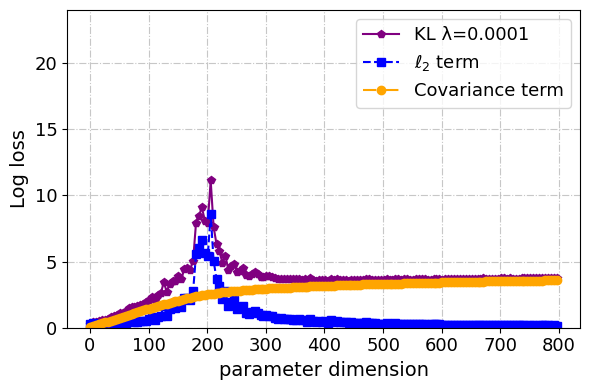}
    \vspace{-0.5em}
    \caption{A decomposition of the terms in over-parameterized $\mathrm{BIC}^+$ in~\eqref{equ:BICover} with $\lambda=0.001$ (\textbf{left}), and  $\lambda=0.0001$(\textbf{right}).}
    \label{fig:BIC_decomp}
    \vspace{-0.5em}
\end{figure*}


In this section, we instantiate a two-layer RF model described in~\eqref{equ:RF_model}, where the first layer is designated for feature mapping and is kept random, and we only train the parameter in the second layer. See Appendix~\ref{appx:exp} for experimental details and additional results.

We evaluate the over-parameterized Gibbs-based $\mathrm{BIC}^+$ and $\mathrm{BIC}^-$ using $n=200$ samples generated by the linear model
\begin{equation*}
     y_i = \vx_i^\top \vw^* +\epsilon_i,\quad \vw^*\in \mathbb{R}^d,\ \|\vw^*\|_2^2=1,\ \epsilon_i \sim \mathcal{N}(0, \sigma^2),
\end{equation*}
with input dimension $d=400$ and noise $\sigma^2=0.1$, and we use MALA to sample from Gibbs distribution. 

\textbf{Selection of $\sigma^{2}$}
By examining the mean of the Gibbs posterior, given as ${\hat{W}_{\lambda} = (\lambda n\mI_p+\mB^\top \mB)^{-1} \mB^\top \mY}$, and its covariance $\smash{\mSigma_w =\sigma^{2}(\lambda n\mI_p+\mB^\top \mB)^{-1}}$, it becomes evident that $\sigma^2$ impacts only the variance of the posterior and not its mean. A large $\sigma^2$ can introduce instability in the sampling from the posterior and deteriorate the test performance. 
Thus, in our configuration, we opt for a smaller $\sigma=0.05$ to ensure good model performance, even if it results in a larger scaling factor for empirical risk. In our upcoming discussion, We will investigate how the other prior parameter $\lambda$ influences the posterior.

\textbf{Double-descent of population risk}
As depicted in Figure~\ref{fig:over_loss} (left), the peak of test loss is located at the interpolation threshold, i.e., when $p=n=200$, resulting in the highest generalization error. As $p$ continues to increase, the test error begins to decline again, even falling below the levels observed in the under-parameterized regime $p<n$.

\textbf{Comparison of different BICs}
In the right panel of Figure~\ref{fig:over_loss} and Figure~\ref{fig:BIC_comparasion} (left), it is evident that the classic $\mathrm{BIC}$ prefers moderate model due to the ill-defined issue we discussed before. In Figure~\ref{fig:BIC_comparasion} (left), we compare our Gibbs-based BICs with the WBIC proposed in~\cite{watanabe2013widely}. The analysis of
WBIC is based on approximating the marginal likelihood for singular models in the classical large $n$ regime, and such an approximation becomes sensitive when the prior parameter $\lambda$ is small. Thus, WBIC also fails to capture the marginal likelihood, whereas our over-parameterized $\mathrm{BIC}^+$ and $\mathrm{BIC}^-$  in~\eqref{equ:BICover} and~\eqref{equ:BIC-over} based on the exact forms of the marginal likelihood, succeeds in selecting the large model.

\textbf{Mismatch between BIC and population risk} Note that even though the Gibbs-based BICs might favor a larger model for increased values of $p$, it does not exhibit double-descent behavior. This discrepancy becomes evident when examining Figure~\ref{fig:BIC_comparasion} (middle) and Figure~\ref{fig:BIC_comparasion} (right), where BIC exhibits a distinctive pattern compared to the population risk. A similar mismatch between marginal likelihood and generalization for ERM has been observed in~\cite{lotfi2022bayesian}.

We further investigate the inconsistency between the marginal likelihood and population risk for the Gibbs algorithm in Figure~\ref{fig:KL_ISKL}. Unlike the classical BIC, where the penalty term $p\log n/(2n)$ is order-wise larger than the $p/n$ term in classical AIC, it can be seen that the KL divergence term in the over-parameterized $\smash{\mathrm{BIC}^+}$ can be significantly smaller than the generalization error $I_{\mathrm{SKL}}$ in Figure~\ref{fig:KL_ISKL}, depending on the value of $\lambda$ and $p$. Thus, the mismatch between marginal likelihood (BIC) and population risk (AIC) is even more complicated in the over-parameterized setting due to the influence of prior distribution.

\textbf{Similarity between KL divergence and generalization}
When comparing the KL divergence term of $\mathrm{BIC}^+$ in Figure~\ref{fig:KL_ISKL} (left) and the generalization error in Figure~\ref{fig:KL_ISKL} (right) with the same $\lambda$, a similar trend emerges as $p$ increases. To understand such similarity between the two terms, we decompose the penalty term of the over-parameterized $\mathrm{BIC}^+$ in Figure~\ref{fig:BIC_decomp} into $\ell_2$ term, covariance term.
 
As shown in~Figure~\ref{fig:BIC_decomp}, when $p\leq n$, the model prior can be ignored, and the training loss becomes the dominant factor of $\mathrm{BIC}^+$ (training loss is plotted in Appendix~\ref{appx:exp} due its large scale). In this case, the KL divergence and the covariance term increase with $p$, corresponding to the classical BIC.  
When $p\geq n$, the KL divergence penalty term dominates the over-parameterized $\mathrm{BIC}^+$. In this regime, multiple weights can fit the training data perfectly. From Figure~\ref{fig:BIC_decomp}, we note that regardless of the chosen $\lambda$, the $\ell_2$ term exhibits double-descent behavior and decreases as p increases. This suggests that the Gibbs algorithm prefers the weights with smaller $\ell_2$ norm to fit the data. Note that similar phenomena are observed for SGD, and generalization error bounds using different weights norms are established in \cite{neyshabur2017exploring,bartlett2017spectrally}. Thus, a smaller $\ell_2$ norm can lead to a better generalization performance. Thus, the behavior of the $\ell_2$ norm may elucidate why, for a given $\lambda$ value, both KL divergence and the generalization error $I_{\mathrm{SKL}}$ exhibit similar double-descent behavior.

\textbf{Mismatch between KL divergence and generalization}
The comparison between Figure~\ref{fig:KL_ISKL} (left) and Figure~\ref{fig:KL_ISKL} (right) with different values of $\lambda$ reveals a mismatch between KL divergence and generalization error. A closer examination of Figure~\ref{fig:BIC_decomp} (left) and Figure~\ref{fig:BIC_decomp} (right) shows that smaller $\lambda$ results in a smaller $\ell_2$ term in KL divergence. Note that $\|\hat{W}_{\mathrm{\lambda}}\|_2^2$ is increasing with smaller $\lambda$, but the $\ell_2$ term in KL divergence equals to $\frac{\lambda}{2\sigma^2} \|\hat{W}_{\mathrm{\lambda}}\|_2^2$. Consequently, the KL divergence places a more substantial penalty on larger values of $\lambda$, which explains the mismatch between the KL divergence and $I_{\mathrm{SKL}}$ with different priors by varying $\lambda$.  A comprehensive review can be found in the experimental results section, Appendix~\ref{appx:exp}.





\newpage
\bibliographystyle{ieeetr}
\bibliography{ref}

\begin{thebibliography}{10}

\bibitem{geman1992}
S.~{Geman}, E.~{Bienenstock}, and R.~{Doursat}, ``Neural networks and the bias/variance dilemma,'' {\em Neural Computation}, vol.~4, no.~1, pp.~1--58, 1992.

\bibitem{zhang2016understanding}
C.~Zhang, S.~Bengio, M.~Hardt, B.~Recht, and O.~Vinyals, ``Understanding deep learning requires rethinking generalization,'' in {\em Proc. International Conference on Learning Representations (ICLR)}, 2017.

\bibitem{belkin2019reconciling}
M.~Belkin, D.~Hsu, S.~Ma, and S.~Mandal, ``Reconciling modern machine-learning practice and the classical bias--variance trade-off,'' {\em Proceedings of the National Academy of Sciences}, vol.~116, no.~32, pp.~15849--15854, 2019.

\bibitem{zhang2006information}
T.~Zhang, ``Information-theoretic upper and lower bounds for statistical estimation,'' {\em IEEE Transactions on Information Theory}, vol.~52, no.~4, pp.~1307--1321, 2006.

\bibitem{aminian2021exact}
G.~Aminian, Y.~Bu, L.~Toni, M.~Rodrigues, and G.~Wornell, ``An exact characterization of the generalization error for the {G}ibbs algorithm,'' {\em Advances in Neural Information Processing Systems}, vol.~34, pp.~8106--8118, 2021.

\bibitem{chen2008extended}
J.~Chen and Z.~Chen, ``Extended {B}ayesian information criteria for model selection with large model spaces,'' {\em Biometrika}, vol.~95, no.~3, pp.~759--771, 2008.

\bibitem{fan2013tuning}
Y.~Fan and C.~Y. Tang, ``Tuning parameter selection in high dimensional penalized likelihood,'' {\em Journal of the Royal Statistical Society: SERIES B: Statistical Methodology}, pp.~531--552, 2013.

\bibitem{watanabe2013widely}
S.~Watanabe, ``A widely applicable bayesian information criterion,'' {\em The Journal of Machine Learning Research}, vol.~14, no.~1, pp.~867--897, 2013.

\bibitem{advani2020high}
M.~S. Advani, A.~M. Saxe, and H.~Sompolinsky, ``High-dimensional dynamics of generalization error in neural networks,'' {\em Neural Networks}, 2020.

\bibitem{geiger2019jamming}
M.~Geiger, S.~Spigler, S.~d'Ascoli, L.~Sagun, M.~Baity-Jesi, G.~Biroli, and M.~Wyart, ``Jamming transition as a paradigm to understand the loss landscape of deep neural networks,'' {\em Physical Review E}, vol.~100, no.~1, p.~012115, 2019.

\bibitem{nakkiran2019deep}
P.~Nakkiran, G.~Kaplun, Y.~Bansal, T.~Yang, B.~Barak, and I.~Sutskever, ``Deep double descent: Where bigger models and more data hurt,'' in {\em International Conference on Learning Representations}, 2019.

\bibitem{Belkin_2020}
M.~Belkin, D.~Hsu, and J.~Xu, ``Two models of double descent for weak features,'' {\em {SIAM} Journal on Mathematics of Data Science}, vol.~2, pp.~1167--1180, jan 2020.

\bibitem{hastie2022surprises}
T.~Hastie, A.~Montanari, S.~Rosset, and R.~J. Tibshirani, ``Surprises in high-dimensional ridgeless least squares interpolation,'' {\em The Annals of Statistics}, vol.~50, no.~2, pp.~949--986, 2022.

\bibitem{bartlett2020benign}
P.~L. Bartlett, P.~M. Long, G.~Lugosi, and A.~Tsigler, ``Benign overfitting in linear regression,'' {\em Proceedings of the National Academy of Sciences}, 2020.

\bibitem{muthukumar2020harmless}
V.~Muthukumar, K.~Vodrahalli, V.~Subramanian, and A.~Sahai, ``Harmless interpolation of noisy data in regression,'' {\em IEEE Journal on Selected Areas in Information Theory}, 2020.

\bibitem{deng2022model}
Z.~Deng, A.~Kammoun, and C.~Thrampoulidis, ``A model of double descent for high-dimensional binary linear classification,'' {\em Information and Inference: A Journal of the IMA}, vol.~11, no.~2, pp.~435--495, 2022.

\bibitem{kini2020analytic}
G.~R. Kini and C.~Thrampoulidis, ``Analytic study of double descent in binary classification: The impact of loss,'' in {\em 2020 IEEE International Symposium on Information Theory (ISIT)}, pp.~2527--2532, IEEE, 2020.

\bibitem{gerace2020generalisation}
F.~Gerace, B.~Loureiro, F.~Krzakala, M.~M{\'e}zard, and L.~Zdeborov{\'a}, ``Generalisation error in learning with random features and the hidden manifold model,'' in {\em International Conference on Machine Learning}, pp.~3452--3462, PMLR, 2020.

\bibitem{mei2022generalization}
S.~Mei and A.~Montanari, ``The generalization error of random features regression: Precise asymptotics and the double descent curve,'' {\em Communications on Pure and Applied Mathematics}, vol.~75, no.~4, pp.~667--766, 2022.

\bibitem{d2020double}
S.~d’Ascoli, M.~Refinetti, G.~Biroli, and F.~Krzakala, ``Double trouble in double descent: Bias and variance (s) in the lazy regime,'' in {\em International Conference on Machine Learning}, pp.~2280--2290, PMLR, 2020.

\bibitem{weinan2020comparative}
E.~Weinan, C.~Ma, and L.~Wu, ``A comparative analysis of optimization and generalization properties of two-layer neural network and random feature models under gradient descent dynamics,'' {\em Science China Mathematics}, pp.~1--24, 2020.

\bibitem{liu2022double}
F.~Liu, J.~Suykens, and V.~Cevher, ``On the double descent of random features models trained with {SGD},'' {\em Advances in Neural Information Processing Systems}, vol.~35, pp.~34966--34980, 2022.

\bibitem{yang2020rethinking}
Z.~Yang, Y.~Yu, C.~You, J.~Steinhardt, and Y.~Ma, ``Rethinking bias-variance trade-off for generalization of neural networks,'' in {\em International Conference on Machine Learning}, pp.~10767--10777, PMLR, 2020.

\bibitem{dwivedi2020revisiting}
R.~Dwivedi, C.~Singh, B.~Yu, and M.~J. Wainwright, ``Revisiting complexity and the bias-variance tradeoff,'' {\em arXiv preprint arXiv:2006.10189}, 2020.

\bibitem{nakkiran2019more}
P.~Nakkiran, ``More data can hurt for linear regression: Sample-wise double descent,'' {\em arXiv preprint arXiv:1912.07242}, 2019.

\bibitem{d2020triple}
S.~d'Ascoli, L.~Sagun, and G.~Biroli, ``Triple descent and the two kinds of overfitting: Where \& why do they appear?,'' {\em Advances in Neural Information Processing Systems}, vol.~33, pp.~3058--3069, 2020.

\bibitem{heckel2020early}
R.~Heckel and F.~F. Yilmaz, ``Early stopping in deep networks: Double descent and how to eliminate it,'' {\em arXiv preprint arXiv:2007.10099}, 2020.

\bibitem{germain2016pac}
P.~Germain, F.~Bach, A.~Lacoste, and S.~Lacoste-Julien, ``Pac-bayesian theory meets bayesian inference,'' {\em Advances in Neural Information Processing Systems}, vol.~29, 2016.

\bibitem{hodgkinson2023interpolating}
L.~Hodgkinson, C.~van~der Heide, R.~Salomone, F.~Roosta, and M.~W. Mahoney, ``The interpolating information criterion for overparameterized models,'' {\em arXiv preprint arXiv:2307.07785}, 2023.

\bibitem{immer2021scalable}
A.~Immer, M.~Bauer, V.~Fortuin, G.~R{\"a}tsch, and K.~M. Emtiyaz, ``Scalable marginal likelihood estimation for model selection in deep learning,'' in {\em International Conference on Machine Learning}, pp.~4563--4573, PMLR, 2021.

\bibitem{lotfi2022bayesian}
S.~Lotfi, P.~Izmailov, G.~Benton, M.~Goldblum, and A.~G. Wilson, ``Bayesian model selection, the marginal likelihood, and generalization,'' in {\em International Conference on Machine Learning}, pp.~14223--14247, PMLR, 2022.

\bibitem{RePEc:eee:econom:v:16:y:1981:i:1:p:3-14}
H.~Akaike, ``Likelihood of a model and information criteria,'' {\em Journal of Econometrics}, vol.~16, no.~1, pp.~3--14, 1981.

\bibitem{schwarz1978estimating}
G.~Schwarz {\em et~al.}, ``Estimating the dimension of a model,'' {\em The annals of statistics}, vol.~6, no.~2, pp.~461--464, 1978.

\bibitem{xu2017information}
A.~Xu and M.~Raginsky, ``Information-theoretic analysis of generalization capability of learning algorithms,'' in {\em Proc. Advances in Neural Information Processing Systems (NIPS)}, pp.~2524--2533, 2017.

\bibitem{perlaza2022empirical}
S.~M. Perlaza, G.~Bisson, I.~Esnaola, A.~Jean-Marie, and S.~Rini, ``Empirical risk minimization with relative entropy regularization: Optimality and sensitivity analysis,'' in {\em 2022 IEEE International Symposium on Information Theory (ISIT)}, pp.~684--689, IEEE, 2022.

\bibitem{perlaza2023validation}
S.~M. Perlaza, I.~Esnaola, G.~Bisson, and H.~V. Poor, ``On the validation of gibbs algorithms: Training datasets, test datasets and their aggregation,'' {\em arXiv preprint arXiv:2306.12380}, 2023.

\bibitem{gibbs1902elementary}
J.~W. Gibbs, ``Elementary principles of statistical mechanics,'' {\em Compare}, vol.~289, p.~314, 1902.

\bibitem{jaynes1957information}
E.~T. Jaynes, ``Information theory and statistical mechanics,'' {\em Physical review}, vol.~106, no.~4, p.~620, 1957.

\bibitem{raginsky2017nonconvex}
M.~Raginsky, A.~Rakhlin, and M.~Telgarsky, ``Non-convex learning via stochastic gradient {L}angevin dynamics: a nonasymptotic analysis,'' in {\em Conference on Learning Theory}, pp.~1674--1703, PMLR, 2017.

\bibitem{dziugaite2019entropysgd}
G.~K. Dziugaite and D.~Roy, ``Entropy-{SGD} optimizes the prior of a {PAC}-{B}ayes bound: Generalization properties of {E}ntropy-{SGD} and data-dependent priors,'' in {\em International Conference on Machine Learning}, pp.~1377--1386, PMLR, 2018.

\bibitem{dwivedi2018log}
R.~Dwivedi, Y.~Chen, M.~J. Wainwright, and B.~Yu, ``Log-concave sampling: Metropolis-hastings algorithms are fast!,'' in {\em Conference on learning theory}, pp.~793--797, PMLR, 2018.

\bibitem{mangoubi2019nonconvex}
O.~Mangoubi and N.~K. Vishnoi, ``Nonconvex sampling with the metropolis-adjusted langevin algorithm,'' in {\em Conference on Learning Theory}, pp.~2259--2293, PMLR, 2019.

\bibitem{holzmuller2023convergence}
D.~Holzm{\"u}ller and F.~Bach, ``Convergence rates for non-log-concave sampling and log-partition estimation,'' {\em arXiv preprint arXiv:2303.03237}, 2023.

\bibitem{rahimi2008random}
A.~Rahimi and B.~Recht, ``Random features for large-scale kernel machines,'' in {\em Advances in neural information processing systems}, pp.~1177--1184, 2008.

\bibitem{pennington2017nonlinear}
J.~Pennington and P.~Worah, ``Nonlinear random matrix theory for deep learning,'' in {\em Advances in Neural Information Processing Systems}, pp.~2637--2646, 2017.

\bibitem{tulino2004random}
A.~M. Tulino, S.~Verd{\'u}, and S.~Verdu, {\em Random matrix theory and wireless communications}.
\newblock Now Publishers Inc, 2004.

\bibitem{neyshabur2017exploring}
B.~Neyshabur, S.~Bhojanapalli, D.~McAllester, and N.~Srebro, ``Exploring generalization in deep learning,'' {\em Advances in neural information processing systems}, vol.~30, 2017.

\bibitem{bartlett2017spectrally}
P.~L. Bartlett, D.~J. Foster, and M.~J. Telgarsky, ``Spectrally-normalized margin bounds for neural networks,'' {\em Advances in neural information processing systems}, vol.~30, 2017.

\bibitem{van2000asymptotic}
A.~W. Van~der Vaart, {\em Asymptotic statistics}, vol.~3.
\newblock Cambridge university press, 2000.

\bibitem{kleijn2012bernstein}
B.~J. Kleijn, A.~W. van~der Vaart, {\em et~al.}, ``The {B}ernstein-von-{M}ises theorem under misspecification,'' {\em Electronic Journal of Statistics}, vol.~6, pp.~354--381, 2012.

\bibitem{palomar2008lautum}
D.~P. Palomar and S.~Verd{\'u}, ``Lautum information,'' {\em IEEE transactions on information theory}, vol.~54, no.~3, pp.~964--975, 2008.

\bibitem{murphy2007conjugate}
K.~P. Murphy, ``Conjugate bayesian analysis of the gaussian distribution,'' {\em def}, vol.~1, no.~2$\sigma$2, p.~16, 2007.

\end{thebibliography}

\newpage
\appendix

\section{Derivation of classical AIC}\label{appx:AIC}
We start by formally presenting the regularization conditions required for the standard asymptotic normality of MLE in the classical regime.   
\begin{assumption}\label{assump:MLE}
\textbf{Regularity Conditions for MLE}
\begin{enumerate}
\item Identifiability: $P(y| \vx;\vtheta) \neq P(y| \vx;\vtheta^{'})$ for $\vtheta \neq \vtheta^{'}$. 
\item $\Theta$ is an open subset of $\mathbb{R}^{p}$.
\item The function $\log P(y| \vx ;\vtheta)$ is three times continuously differentiable with respect to $\vtheta$.
\item There exist functions $F_{1}(z):\mathcal{Z} \rightarrow \mathbb{R}$,$F_{2}(z):\mathcal{Z} \rightarrow \mathbb{R}$ and $M(z):\mathcal{Z} \rightarrow \mathbb{R}$, such that
\begin{equation*}
    \mathbb{E}_{Z \sim P(\vz;\vtheta)} \big[M(Z)\big] < \infty,
\end{equation*}
and the following inequalities hold for any $\vtheta \in \Theta$,
\begin{align*}
 \left|\frac{\partial \log P(y| \vx ;\vtheta)}{\partial \theta_i} \right|  &< F_{1}(z), \quad  
  \left|\frac{\partial^{2} \log P(y| \vx ;\vtheta)}{\partial \theta_i \partial \theta_j} \right|  < F_{1}(z), \\
  \left|\frac{\partial^{3} \log P(y| \vx ;\vtheta)}{\partial \theta_i \partial \theta_j \partial \theta_k} \right|  &< M(z),\quad i,j,k=1,2,\cdots,p.
\end{align*}

\item The following inequality holds for an arbitrary $\vtheta \in \Theta$ and $i,j=1,2,...,p$,
\begin{equation*}
    0 < \mathbb{E} \Big[ \frac{\partial \log P(y| \vx ;\vtheta)}{\partial \theta_i} \frac{\partial \log P(y| \vx ;\vtheta)}{\partial \theta_j} \Big] < \infty.
\end{equation*}
\end{enumerate}
\end{assumption}

In the following, we provide proof of the classical AIC for reference. 

The AIC model selection in \eqref{equ:approx_kl_divergence} is equivalent to:
\begin{align}\label{eq:Expan_AIC}
\underset{k}{\operatorname{argmin}}\  \mathbb{E}_{P_{Z}}\big[-\log P_k(y| \vx ;\hat{\vtheta}_{\mathrm{ML}}^{(k)})\big]
&=\underset{k}{\operatorname{argmin}}\ \mathbb{E}_{P_{S}}\big[L_{E}(\hat{\vtheta}_{\mathrm{ML}}^{(k)},S)\big]+ \overline{gen}(\hat{\vtheta}_{\mathrm{ML}}^{(k)}, P_{Z}).
\end{align}
As $n\rightarrow \infty$, under the above regularization conditions, which guarantee that $\hat{\vtheta}_{\mathrm{ML}}$ is unique, the asymptotic normality of the MLE states that the distribution of $\hat{\vtheta}_{\mathrm{ML}}$ converges to 
\begin{equation}\label{equ:MLE_normal}
    \mathcal{N}(\vtheta^{*},\frac{1}{n}J(\vtheta^{*})^{-1}I(\vtheta^{*})J(\vtheta^{*})^{-1}), \text{ with   } \vtheta^{*} \triangleq \argmin\limits_{\vtheta \in \Theta} D(P_Z\|P(y|x;\vtheta)),
\end{equation}
where
\begin{equation}
    J(\vtheta) \triangleq \mathbb{E}_{P_{Z}}\big[-\nabla^2_{\vtheta}\log P(y| \vx ;\vtheta)\big]\text{ and }I(\vtheta)\triangleq \mathbb{E}_{P_{Z}}\big[\nabla_{\vtheta}\log P(y| \vx ;\vtheta) \nabla_{\vtheta}\log P(y| \vx ;\vtheta)^\top\big].
\end{equation}
When the true model is in the parametric family $P_{Z}=P(y| \vx ;\vtheta^{*})$, we have $J(\vtheta^{*}) =I(\vtheta^{*})$, which is the Fisher information matrix.

Thus,  the generalization term can be written as
\begin{align}\label{eq:gen_AIC}
-\overline{gen}(\hat{\vtheta}_{\mathrm{ML}}, P_{Z})
&=\mathbb{E}_{P_{S}}\big[L_{E}(\hat{\vtheta}_{\mathrm{ML}},S)\big]-L_{P}(\hat{\vtheta}_{\mathrm{ML}},P_Z)\nn\\
&=\mathbb{E}_{P_{S}}\big[L_{E}(\hat{\vtheta}_{\mathrm{ML}},S) \big] -\mathbb{E}_{P_{S}}\big[L_{E}(\vtheta^{*},S)\big]+ \mathbb{E}_{P_{S}}\big[L_{E}(\vtheta^{*},S) \big] -L_{P}(\hat{\vtheta}_{\mathrm{ML}},P_Z)\nn\\
&= \mathbb{E}_{P_{S}}\big[L_{E}(\hat{\vtheta}_{\mathrm{ML}},S)) - L_{E}(\vtheta^{*},S)  \big]+L_{P}(\vtheta^{*},P_Z)   -L_{P}(\hat{\vtheta}_{\mathrm{ML}},P_Z).
\end{align}
As $\hat{\vtheta}_{\mathrm{ML}}$ minimizes $L_{E}(\hat{\vtheta}_{\mathrm{ML}},S)$, we take the Taylor expansion of $L_{E}(\vtheta^{*},S)$ around the point $\hat{\vtheta}_{\mathrm{ML}}$
\begin{equation}\label{eq:talor_empir}
    L_{E}(\vtheta^{*},S) =L_{E}(\hat{\vtheta}_{\mathrm{ML}},S) +\frac{1}{2} (\vtheta^{*}-\hat{\vtheta}_{\mathrm{ML}})^\top J(\hat{\vtheta}_{\mathrm{ML}}) (\vtheta^{*}-\hat{\vtheta}_{\mathrm{ML}})+\cdots.
\end{equation}
 \newpage
And the Taylor expansion of $L_{P}(\hat{\vtheta}_{\mathrm{ML}},P_Z)$ around $\vtheta^{*}$ yields
\begin{equation}\label{eq:talor_popl}
   L_{P}(\hat{\vtheta}_{\mathrm{ML}},P_Z)= L_{P}(\vtheta^{*},P_Z)+\frac{1}{2}(\vtheta^{*}-\hat{\vtheta}_{\mathrm{ML}})^\top J(\vtheta^{*})(\vtheta^{*}-\hat{\vtheta}_{\mathrm{ML}})+\cdots.
\end{equation}

If we use the quadratic approximation of \eqref{eq:talor_empir} and \eqref{eq:talor_popl} in \eqref{eq:gen_AIC}, we can get the following asymptotic expression for the generalization error
\begin{equation}
\overline{gen}(\hat{\vtheta}_{\mathrm{ML}}, P_{Z}) = \frac{1}{n}  \mathrm{tr}( I(\vtheta^{*})J(\vtheta^{*})^{-1}),
\end{equation}
where the last step is due to the asymptotic normality of the MLE, and  $J(\hat{\vtheta}_{\mathrm{ML}})$ will converge to its expectation by the consistency of MLE and the strong law of large numbers. For the case where the true model is in the parametric family $P_{Z}=P(y| \vx ;\vtheta^{*})$, $J(\vtheta^{*}) =I(\vtheta^{*})$, $\overline{gen}(\hat{\vtheta}_{\mathrm{ML}}, P_{Z}) =p/n$. Thus, plug the asymptotic generalization error term back in \eqref{eq:Expan_AIC} can get,
\begin{equation}
    \mathrm{AIC} =-
    \frac{\hat{L}(\hat{\vtheta}_{\mathrm{ML}})}{n} + \frac{p}{n}.%
\end{equation}
\section{Derivation of classical     
BIC}\label{appx:BIC}
The Taylor expansion of the log-likelihood function $\hat{L}(\vtheta)$ around $\hat{\vtheta}_{\mathrm{ML}}$ yields
\begin{equation}
    \hat{L}(\vtheta)=\hat{L}(\hat{\vtheta}_{\mathrm{ML}}) - \frac{n}{2} (\vtheta-\hat{\vtheta}_{\mathrm{ML}})^\top J(\hat{\vtheta}_{\mathrm{ML}}) (\vtheta-\hat{\vtheta}_{\mathrm{ML}})+\cdots.
\end{equation}
Similarly, we can expand the prior distribution $\pi(\vtheta)$ with  Taylor series around $\hat{\vtheta}_{\mathrm{ML}}$ as
\begin{equation}
    \pi(\vtheta)=\pi(\hat{\vtheta}_{\mathrm{ML}}) +(\vtheta-\hat{\vtheta}_{\mathrm{ML}})^\top \nabla \pi(\vtheta)\big|_{\vtheta={\hat{\vtheta}}_{\mathrm{ML}}} +\cdots.
\end{equation}

The Laplace approximation takes advantage of the fact
that when $n$ is sufficiently large,
the integrand is concentrated in a neighborhood of the mode of
$\hat{L}(\vtheta)$,
i.e., the maximum likelihood (ML) estimator
$\hat{\vtheta}_{\mathrm{ML}}$.
Thus,
\begin{align*}
    &m(\vz^n) \\
    &= \int \exp\{\hat{L}(\vtheta)\}\pi (\vtheta) d\vtheta \\
    & \approx \int \exp\Big\{\hat{L}(\hat{\vtheta}_{\mathrm{ML}}) - \frac{n}{2} (\vtheta-\hat{\vtheta}_{\mathrm{ML}})^\top J(\hat{\vtheta}_{\mathrm{ML}}) (\vtheta-\hat{\vtheta}_{\mathrm{ML}})\Big\}\Big(\pi(\hat{\vtheta}_{\mathrm{ML}}) +(\vtheta-\hat{\vtheta}_{\mathrm{ML}})^\top \nabla \pi(\vtheta)\big|_{\vtheta={\hat{\vtheta}}_{\mathrm{ML}}}\Big) d\vtheta \\
    & \overset{(a)}{\approx} \exp\{\hat{L}(\hat{\vtheta}_{\mathrm{ML}}) \}\pi (\hat{\vtheta}_{\mathrm{ML}})\int  \exp\Big\{- \frac{n}{2} (\vtheta-\hat{\vtheta}_{\mathrm{ML}})^\top J(\hat{\vtheta}_{\mathrm{ML}}) (\vtheta-\hat{\vtheta}_{\mathrm{ML}})\Big\} d\vtheta \\
    & = \exp\{\hat{L}(\hat{\vtheta}_{\mathrm{ML}}) \}\pi (\hat{\vtheta}_{\mathrm{ML}})(2\pi)^{p/2} n^{-p/2} |J(\hat{\vtheta}_{\mathrm{ML}})|^{-1/2},
\end{align*}
where $(a)$ follows from the following fact
\begin{equation}
    \int  (\vtheta-\hat{\vtheta}_{\mathrm{ML}})^\top \exp\Big\{- \frac{n}{2} (\vtheta-\hat{\vtheta}_{\mathrm{ML}})^\top J(\hat{\vtheta}_{\mathrm{ML}}) (\vtheta-\hat{\vtheta}_{\mathrm{ML}})\Big\} d\vtheta = 0.
\end{equation}
Taking the logarithm of this expression and multiplying it by $-\frac{1}{n}$, we obtain
\begin{equation}\label{equ:BIC_full}
    -\frac{1}{n}\log (m(\vz^n)) \approx -\frac{1}{n}\hat{L}(\hat{\vtheta}_{\mathrm{ML}}) + \frac{p}{2n} \log \frac{n}{2 \pi} + \frac{1}{n}\log |J(\hat{\vtheta}_{\mathrm{ML}})| -\frac{1}{n} \log \pi (\hat{\vtheta}_{\mathrm{ML}}).
\end{equation}

Note that $J(\hat{\vtheta}_{\mathrm{ML}})$ is a random Hessian matrix.
As $n \to \infty$, $J(\hat{\vtheta}_{\mathrm{ML}})$ will converge to its expectation by the consistency of MLE and the strong law of large numbers. Thus, both $\log |J(\hat{\vtheta}_{\mathrm{ML}})|$ and $\log \pi (\hat{\vtheta}_{\mathrm{ML}})$ have order less than $O(1)$
with respect to the sample size $n$,
and can be ignored in the BIC, which gives the following classical form of BIC:
\begin{equation}
    \mathrm{BIC} = -
    \frac{\hat{L}(\hat{\vtheta}_{\mathrm{ML}})}{n} + \frac{p\log n}{2n}.
\end{equation}

\section{Details of SGLD and Gibbs Algrotihm}\label{appx:SGLD}

In~\cite{raginsky2017nonconvex}, it is shown that the following SGLD updates 
\begin{equation}
    \hat{W}_{t+1}=\hat{W}_{t}-\eta\nabla F_{Z}(W)+\sqrt{\frac{2\eta}{\beta}}\zeta_{t},\ t=0,1,...,
\end{equation}
will converge to the Gibbs algorithm satisfying ${P_{\hat{W}|S}\propto \exp(-\beta F_{Z}(\hat{W}))}$.

A detailed description of Metropolis adjusted Langevin algorithm is provided in Algorithm~\ref{alg:MALA}.

\begin{algorithm}[ht]\label{alg:MALA}
\caption{Metropolis adjusted Langevin algorithm (MALA)}
\label{alg:Framwork}
\begin{algorithmic}[1]
\REQUIRE  Step size $\eta$ and a sample $w_0$ from a starting distribution $\mu_0$
\ENSURE Sequence $w_1$, $w_2$, ...
 \FOR {$i=0,1,...$ }
    \STATE $w_{i+1} \sim N \left( w_i - \eta\nabla f(w_i), \frac{2\eta}{\beta} I_d \right)$  \% Propose a new state
    \STATE 
    $\alpha_{i+1} = \min \left\{ 
        1, 
        \frac{
            \exp \left( -f(w_{i+1}) - \beta \left\| w_i - w_{i+1} + \eta \nabla f(w_{i+1}) \right\|^2_2/4\eta \right)
        }{
            \exp \left( -f(w_i) - \beta \left\| w_{i+1} - w_i +\eta \nabla f(w_i) \right\|^2_2/4\eta \right)
        }
    \right\}$
    \STATE $U_{i+1} \sim U[0, 1]$\\
    \IF{$U_{i+1} \leq \alpha_{i+1}$}
        \STATE $w_{i+1} \leftarrow w_{i+1}$ \% accept the sample
        \STATE $w_{i+1} \leftarrow w_i$ \% reject the sample
    \ENDIF 
\ENDFOR
\end{algorithmic}
\end{algorithm}

Note that our Gibbs algorithm contains an arbitrary prior distribution $\pi$, i.e., $P_{\hat{W}|S}\propto\pi(\hat{W}) \exp^{-nL_{E}(\hat{W},S)}$. To have the same format as $F_{Z}(W)$, we let
\begin{equation}\label{L_E_hat}
    F_{Z}(w)=L_{E}(w,s) - \frac{1}{n} \log\pi(w).
\end{equation}
Thus, the loss function used in the SGLD or MALA update becomes 
\begin{equation}\label{loss}
    \ell(w,z_{i})=-\log P(y_{i}|x_{i},w) -\frac{1}{n}\log\pi(w).
\end{equation}
When the prior follows a Gaussian distribution $\pi(w)\sim \mathcal{N}(0, \frac{\sigma^{2}}{\lambda n} I_{p})$, the second term in the loss function can be viewed as a regularization term derived from the log prior. It is crucial to notice that the empirical log loss in the subsequent SGLD or MALA computation only includes the $L_{E}(w,s)$ term and does not incorporate the regularization term. 
As shown in \cite{raginsky2017nonconvex}, the regularizer term does not violate the assumptions used in the loss function, ensuring that our loss function will also converge to the desired Gibbs distribution.

\begin{figure*}[!t]
    \centering
    \includegraphics[width=0.4\textwidth]{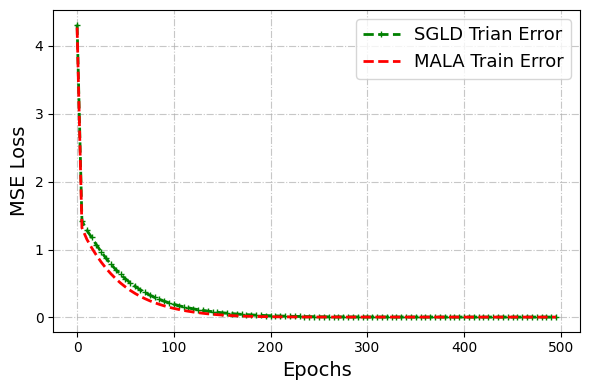}
    \hspace*{2em}
    \includegraphics[width=0.4\textwidth]{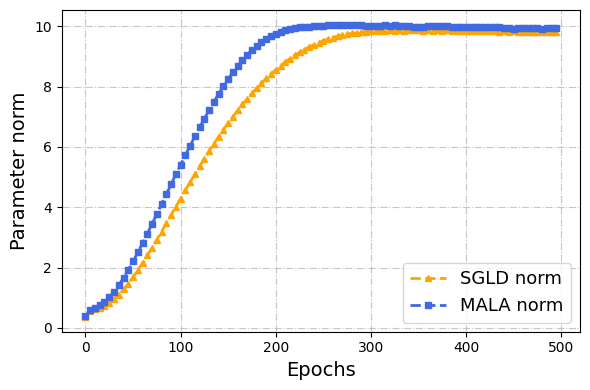}
    \vspace{-0.5em}
    \caption{The training MSE losses of MALA and SGLD are presented on the left (\textbf{left}), with a comparative analysis of their L2 norms shown on the right (\textbf{right}). These results were obtained for $\mathbf{n=200}$, using a learning rate of 0.001 and a parameter dimension of $p=600$ with $\lambda=0.01$, across ten runs. In terms of computational time, SGLD required an average of 10.4s to converge over 300 epochs, while MALA took an average of 7s to converge over 200 epochs.
     }
    \label{fig:MALA_SGLD_200}
\end{figure*}

\begin{figure*}[!t]
    \centering
    \includegraphics[width=0.4\textwidth]{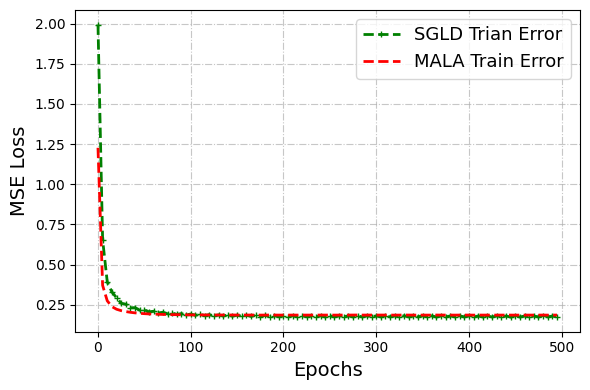}
    \hspace*{2em}
    \includegraphics[width=0.4\textwidth]{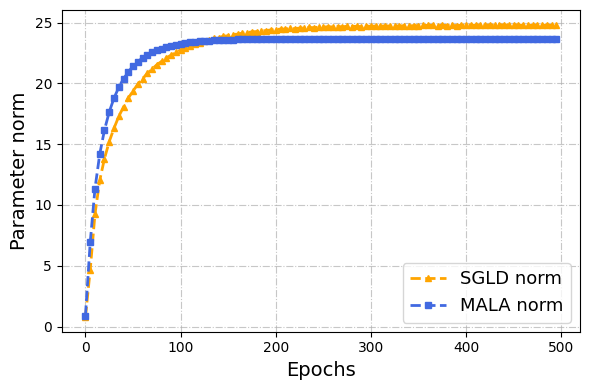}
    \vspace{-0.5em}
    \caption{The training MSE losses of MALA and SGLD are presented on the left (\textbf{left}), with a comparative analysis of their L2 norms shown on the right (\textbf{right}). These results were obtained for $\mathbf{n=800}$, using a learning rate of 0.001 and a parameter dimension of $p=600$ with $\lambda=0.01$, across ten runs. In terms of computational time, SGLD required an average of 12.4 seconds to converge over 200 epochs, while MALA took an average of 12 seconds to converge over 100 epochs.}
    \label{fig:MALA_SGLD_800}
\end{figure*}

In Figure \ref{fig:MALA_SGLD_200} and Figure \ref{fig:MALA_SGLD_800}, when examining both the the training error curve and the convergence rate of the sampling parameter norm over epochs, we observe that both algorithms converge to a similar value. However, it is noteworthy that MALA exhibits a faster convergence in terms of the number of epochs.

When the number of traning samples $n$ is smaller, as in Figure \ref{fig:MALA_SGLD_200}, despite the computational overhead introduced by the additional Metropolis-adjusted step, MALA still outpaces SGLD by approximately 3 seconds. However, for larger sample sizes in Figure~\ref{fig:MALA_SGLD_800}, this Metropolis-adjusted step incurs more additional computational time. Consequently, MALA's convergence time exceeds SGLD's by only 0.4 seconds. Thus, in our experiments, we recommend using MALA for smaller sample sizes, while for larger samples, like when $n=600$ in the Classic regime, SGLD might be more computationally efficient.

\section{Proof of Theorem~\ref{thm:AIC_classical}}\label{appx:Gibbs_AIC}





We note that the original idea of this result is coming from the discussion in Section 4 of~\cite{aminian2021exact}. 

We further assume that the parametric family $\{P(z|\vw), \vw \in \mathcal{W}\}$ and prior $\pi(\vw)$ satisfy all the regularization conditions required for the Bernstein–von-Mises theorem \cite{van2000asymptotic} and the asymptotic Normality of the maximum likelihood estimate (MLE), including Assumption~\ref{assump:MLE} and the condition that $\pi$ is continuous and $\pi>0$ for all $\vw \in \mathcal{W}$.

For the Gibbs algorithm $P^*_{\hat{W}|S}$ defined in \eqref{equ:Gibbs}, if we adopt the log-loss function $\ell(w,\vz) = -\log P(y|\vx;w)$, and set $\beta = n$, the Gibbs algorithm directly corresponds to the Bayesian posterior of the parametric family with prior distribution $\pi$. Therefore,
in the asymptotic regime where $p$ is fixed and $n\to \infty$, Bernstein–von-Mises theorem under model mismatch \cite{van2000asymptotic,kleijn2012bernstein} states that we could approximate the Gibbs algorithm in \eqref{equ:Gibbs} by
\begin{equation*}
   \mathcal{N}(\hat{W}_{\mathrm{ML}}, \frac{1}{n} J(\vw^*)^{-1}),
\end{equation*}
with $\vw^*$ and $J(\vw)$ defined in a similar manner as in \eqref{equ:MLE_normal}.
As $n \to \infty$, the asymptotic Normality of the MLE states that  the distribution of $\hat{W}_{\mathrm{ML}}$ will converge to
\begin{equation*}
    \mathcal{N}(\vw^*, \frac{1}{n} J(\vw^*)^{-1} I(\vw^*)J(\vw^*)^{-1}).
\end{equation*}  
Thus, the Gibbs distribution $P^*_{\hat{W}|S}$ can be approximated as a Gaussian channel regardless of the choice of prior $\pi(\vw)$.
Then, the symmetrized KL information can be computed using~\cite[Theorem 14]{palomar2008lautum}, which characterizes the symmetrized KL information over a vector Gaussian channel, i.e.,  
\begin{equation}
 \frac{I_{\mathrm{SKL}}( P_{\hat{W}|S}^{*}, P_{S})}{n}=\frac{\mathrm{tr}(I(\vw^*)J(\vw^*)^{-1})}{n} =  \mathcal{O}(\frac{p}{n}).
\end{equation}


Therefore, for the Gibbs algorithm $P^*_{\hat{W}|S}$ defined in \eqref{equ:Gibbs}, if we adopt the log-loss function $\ell(w,\vz) = -\log P(y|\vx;w)$, and set $\beta = n$, and let $n \to \infty $ with fixed $p$, we can get the following asymptotic expression for the generalization error,
\begin{equation}
    \frac{I_{\mathrm{SKL}}(P_{\hat{W}|S}^{*},P_S)}{\beta}\rightarrow \frac{p}{n}.
\end{equation}
Plug this result back to \eqref{equ: Gibbs_AIC}, the Gibbs-based AIC can be written as,
\begin{equation}
        \mathrm{AIC^{+}} =  L_{E}(\hat{W}_{\mathrm{Gibbs}},\vz^{n})+ \frac{p}{n}.
\end{equation}


\section{Proof of Proposition~\ref{prop:Gibbs_marginal}}\label{appx:Gibbs_BIC}
If we adopt the log-loss function $\ell(w,\vz) = -\log P(y|\vx;w)$, and set $\beta = n$, the Gibbs distribution in \eqref{equ:Gibbs} can be viewed as the Bayesian posterior distribution, i.e.,
\begin{equation}
    P_{\hat{W}|S}^{*}(\vw|\vz^{n})=\frac{\pi(\vw)\prod\limits_{i=1}^n P(y_i| \vx_i ;\vw)}{V(\vz^{n})}, \text{ with } V(\vz^n)= \int\pi(\vw)\prod\limits_{i=1}^n P(y_i| \vx_i ;\vw) d \vw.    
\end{equation}
Therefore, 
\begin{equation}
\begin{aligned}
 &\mathbb{E}_{P^*_{\hat{W}|S=\vz^n}}\big[L_{E}(\hat{W},\vz^n)\big] + \frac{1}{n}D(P^*_{\hat{W}|S=\vz^n}\| \pi)\\
 &=   \mathbb{E}_{P^*_{\hat{W}|S=\vz^n}}\big[L_{E}(\hat{W},\vz^n)\big] + \frac{1}{n} \mathbb{E}_{P^*_{\hat{W}|S=\vz^n}}\big[\log \frac{\exp\big(-n L_{E}(\hat{W},\vz^n)\big)}{V(z^n)}\big]\\
 &=-\frac{1}{n}\log {V(z^n)}\\
 &= -\frac{1}{n} \log m(\vz^n),
\end{aligned}
\end{equation}
which completes the proof. 

\section{Proof of Theorem~\ref{thm:BIC_Gibbs_classical}}
\label{appx:Gibbs_BIC_classical}

Recall the Gibbs-based BIC is given by
\begin{equation}
    \mathrm{BIC^{+}} \triangleq L_{E}(\hat{W}_{\mathrm{Gibbs}},\vz^n) + \frac{1}{n}D(P^*_{\hat{W}|S=\vz^n}\|\pi).
\end{equation}

Thus, we just need to characterize the asymptotic behavior of the KL divergence term in the regime where $p$ is fixed and $n \to \infty$. Note that The KL divergence can be written as
\begin{equation}\label{equ:KL_entropy}
    \frac{1}{n}D(P^*_{\hat{W}|S=\vz^n}\|\pi) = -\frac{1}{n}H(P^*_{\hat{W}|S=\vz^n}) - \frac{1}{n}\mathbb{E}_{P^*_{\hat{W}|S=\vz^n}}[\log \pi(\hat{W})],
\end{equation}
and $P^*_{\hat{W}|S=\vz^n}$ will converge to $\mathcal{N}(\hat{W}_{\mathrm{ML}}, \frac{1}{n} J(\vw^*)^{-1})$ by Bernstein–von-Mises theorem as shown in Appendix~\ref{appx:Gibbs_AIC}. Then, the second term above will reduce to $\frac{1}{n} \log \pi(\hat{W}_{\mathrm{ML}})$, and converges to zero as $n\to \infty$. 
Using the same Gaussian approximation, the entropy term can be computed as
\begin{equation}
    H(P^*_{\hat{W}|S=\vz^n})= \frac{p}{2}\log(\frac{2\pi e}{n}) +\frac{1}{2}\log|J(\vw^*)^{-1}|.
\end{equation}
As the number of samples $n\to \infty$, $\hat{W}_{\mathrm{Gibbs}} \rightarrow \hat{w}_{\mathrm{ML}}$. Therefore, the Gibbs BIC can be asymptotically approximated by
\begin{align}
   \mathrm{BIC^{+}} &= L_{E}(\hat{W}_{\mathrm{Gibbs}},s)  + \frac{1}{n} D(P^*_{\hat{W}|S=\vz^n}\|\pi)\\
   & = L_{E}(\hat{W}_{\mathrm{Gibbs}},s)  -\frac{1}{n}H(P^*_{\hat{W}|S=\vz^n}) - \frac{1}{n}\mathbb{E}_{P^*_{\hat{W}|S=\vz^n}}[\log \pi(\hat{W})] \\
   &= L_{E}(\hat{w}_{\mathrm{ML}},s)+\frac{p}{2n}\log\frac{n}{2\pi e} + \frac{1}{2n}\log \left|J(\vw^*))\right|-\frac{1}{n} \log \pi(\hat{W}_{\mathrm{ML}}).
\end{align}
Thus, both the terms $p\log 2\pi e$, $\log |J(\vw^*)|$ and $\log \pi (\hat{W}_{\mathrm{ML}})$ have order less than $O(1)$
with respect to the sample size $n$,
and can be ignored in the Gibbs-based BIC.

Comparing the above result with the log-marginal likelihood in~\eqref{equ:BIC_full}, the penalty term in $\mathrm{BIC^{+}}$ differs from the classic BIC by $\frac{p}{2n} \log e$. This is due to the fact that we evaluate the same marginal likelihood $m(\vz^n)$ using different algorithms, and the empirical risk achieved by SGLD or MALA is different from that of the SGD. 

\section{Gibbs Distribution of Random Feature Model}\label{app:RF_posterior}
For random feature model with the prior distribution  $\pi(w) \sim \mathcal{N}(0,\frac{\sigma^{2}}{\lambda n} \mI_p)$ and $L_{E}(w,s)=\frac{1}{n}\sum_{i=1}^{n}\log(y_{i}|x_{i},w)$, the log-posterior $\log (P_{\hat{W}|S}^{*}) \propto \log\pi(w) +\log(e^{-nL_{E}(S)})$ , 
where
\begin{equation*}
   L_{E}(S)=\frac{1}{2n\sigma^{2}}\|\mY-\mB \vw \|_2^2  +\frac{1}{2} \log (2\pi \sigma^{2}).
\end{equation*}
Thus, the Gibbs algorithm, in this case, is given by the following Gaussian posterior distribution, as shown in~\cite{murphy2007conjugate},
\begin{equation}
    P_{\hat{W}|S}^{*} \sim \mathcal{N}(\hat{W}_{\lambda},\mSigma_w),
\end{equation}
where $\hat{W}_{\lambda} = (\lambda n\mI_p+\mB^\top \mB)^{-1} \mB^\top \mY$, and  $ \mSigma_w =\sigma^{2}(\lambda n\mI_p+\mB^\top \mB)^{-1}$.

\section{Proof of Theorem~\ref{thm:over_BIC}}\label{app:RF}
We note that the KL divergence between two Gaussian distributions can be written as 
\begin{equation*}
    D(\mathcal{N}(\vmu_1,\mSigma_1) \| \mathcal{N}(\vmu_2,\mSigma_2)) = \frac{1}{2} \left[ (\vmu_2 - \vmu_1)^\top \mSigma_2^{-1} (\vmu_2 - \vmu_1) +\mathrm{tr}(\mSigma_2^{-1} \mSigma_1)  - p + \log \frac{\det \mSigma_2}{\det \mSigma_1} \right].
\end{equation*}
The KL divergence between the Gibbs posterior of the RF model and the prior can be computed by,
\begin{align}\label{equ:KL_app}
    \frac{1}{n}D(P^*_{\hat{W}|S=\vz^n}\|\pi) &= \frac{1}{2n}\left[\hat{W}_{\lambda}^{\top}(\frac{\sigma^{2}}{\lambda n} \mI_p)^{-1}\hat{W}_{\lambda} + \mathrm{tr}(\frac{\lambda n}{\sigma^{2}}\mSigma_w) +\log \frac{\det(\frac{\sigma^{2}}{\lambda n} \mI_p)}{\det(\mSigma_w)} - p \right] \\
    & = \frac{1}{2n}\left[\frac{\lambda n}{\sigma^{2}}\|\hat{W}_{\lambda}\|_2^2 + \mathrm{tr}\big((\mI_p+\frac{\mB^\top \mB}{\lambda n})^{-1}\big) +\log\det(\mI_p+\frac{\mB^\top \mB}{\lambda n} ) - p \right].\nn
\end{align}

The trace and the log determinant of the random matrix $\mSigma = \frac{\mB^\top \mB}{\lambda n} + \mI_p$ can be computed using the following results from \cite{pennington2017nonlinear}, which characterizes the probability density function of the eigenvalues of the random matrix $\mB^\top\mB/n$ in the over-parameterized regime.

\begin{lemma}\cite{pennington2017nonlinear}\label{lemma:nonlinear}
Let the matrix $\mM = \frac{1}{n}\mB^\top \mB \in \mathbb{R}^{p\times p}$, where $\mB = f\Big(\frac{\mX \mF}{\sqrt{d}}\Big) \in\mathbb{R}^{n \times p}$, all the elements in $\mF \in \mathbb{R}^{d \times p}$ and $\mX \in \mathbb{R}^{n \times d}$ are generated i.i.d from $\mathcal{N}(0,1)$. Suppose that the activation function has zero mean and finite moments, i.e.,
\begin{equation}
    \mathbb{E}[f(\varepsilon)]=0,\quad \mathbb{E}[f(\varepsilon)^k]<\infty,\ \text{ for }  k>1, \quad \varepsilon \sim \mathcal{N}(0,1).
\end{equation}
and define constants $\eta$ and $\xi$ as
\begin{equation}
    \eta \triangleq \mathbb{E}[f(\varepsilon)^2],\quad \xi \triangleq \mathbb{E}[f'(\varepsilon)]^2, \quad \varepsilon \sim \mathcal{N}(0,1),
\end{equation}
as $n,d,p \to \infty$ with $d/p \to \psi$, $d/n \to \phi$, where $\psi,\phi \in (0,\infty)$, then the Stieltjes transform $G(z)$ of the spectral density of random matrix $\mM$ satisfies
\begin{align}
    &dF_\mM(x) = \frac{1}{\pi} \lim_{\epsilon \to 0^+}\mathrm{Im} G(x-i\epsilon),\quad G(z)  = \frac{\psi}{ z}A\Big(\frac{1}{z \psi}\Big) +\frac{1-\psi}{z},\\
    &A(t) = 1+(\eta-\xi)t A_{\phi}(t)A_{\psi}(t) +\frac{A_\phi(t)A_\psi(t)t \xi}{1-A_\phi(t)A_\psi(t)t \xi},
\end{align}
where $A_\phi(t)=1+(A(t)-1)\phi$ and $A_\psi(t)=1+(A(t)-1)\psi$.
\end{lemma}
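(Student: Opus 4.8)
The plan is to take the limit of the exact Gaussian KL formula \eqref{equ:KL_app} term by term. Writing $\gamma \triangleq 1/\lambda$ and $\mM \triangleq \mB^\top \mB/n \in \sR^{p\times p}$, the expression splits into four pieces: (i) the $\ell_2$ term $\frac{\lambda}{2\sigma^2}\|\hat{W}_\lambda\|_2^2$, (ii) the trace term $\frac{1}{2n}\mathrm{tr}\bigl((\mI_p+\gamma\mM)^{-1}\bigr)$, (iii) the log-determinant term $\frac{1}{2n}\log\det(\mI_p+\gamma\mM)$, and (iv) the constant $-\frac{p}{2n}$. Piece (i) is kept verbatim, since it also appears (un-limited) on the right-hand side of the statement. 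Pieces (ii) and (iii) are \emph{linear spectral statistics} of $\mM$: denoting its eigenvalues by $\mu_1,\dots,\mu_p$ and its empirical spectral distribution by $F_\mM$, we have $\frac{1}{2n}\mathrm{tr}((\mI_p+\gamma\mM)^{-1}) = \frac{p}{2n}\int (1+\gamma x)^{-1}\,dF_\mM(x)$ and $\frac{1}{2n}\log\det(\mI_p+\gamma\mM) = \frac{p}{2n}\int \log(1+\gamma x)\,dF_\mM(x)$, with $\frac{p}{2n}\to\frac{r}{2}$. So everything reduces to computing the deterministic limits of these two integrals against the limiting spectral distribution of $\mM$.

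The second step identifies that limiting distribution. Under \eqref{equ:act-func-conditions} we have $\eta=\mathbb{E}[f(\varepsilon)^2]=1$ and, crucially, the extra assumption $\mathbb{E}[f'(\varepsilon)]=0$ forces $\xi=\mathbb{E}[f'(\varepsilon)]^2=0$. Substituting $\xi=0$ and $\eta=1$ into the self-consistent equation of Lemma~\ref{lemma:nonlinear} collapses the deformation term and leaves $A(t)=1+t\,A_\phi(t)A_\psi(t)$, which is exactly the fixed-point equation whose Stieltjes transform $G(z)$ is that of a Marchenko--Pastur law; translating $\psi,\phi$ back through $p/d\to r_1$, $n/d\to r_2$, $r=r_1/r_2$, the limiting $F_\mM$ is the Marchenko--Pastur law of shape $r$, i.e.\ a continuous density on $[(1-\sqrt r)^2,(1+\sqrt r)^2]$ plus a point mass of weight $(1-1/r)^{+}$ at the origin when $r>1$. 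Since the empirical spectral distribution converges almost surely and the largest eigenvalue of $\mM$ is a.s.\ eventually bounded by the right edge $(1+\sqrt r)^2$, the integrands $x\mapsto(1+\gamma x)^{-1}$ and $x\mapsto\log(1+\gamma x)$ (continuous, and effectively bounded on the spectrum for $\gamma>0$) yield almost-sure convergence of both integrals to their Marchenko--Pastur values.

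The third step evaluates the two Marchenko--Pastur integrals in closed form. The trace integral equals $\frac{1}{\gamma}m_{MP}(-1/\gamma)$, where $m_{MP}$ is the Stieltjes transform of the Marchenko--Pastur law; its standard closed form carries the square root $\sqrt{\gamma^2(1-r)^2+2\gamma(1+r)+1}$ whose branch points are the edges $(1\pm\sqrt r)^2$, and indeed expanding $\mathcal{F}(\gamma,r)=2\gamma(1+r)+2-2\sqrt{\gamma^2(1-r)^2+2\gamma(1+r)+1}$ exhibits exactly this combination, with the point mass at $0$ (where $(1+\gamma\cdot 0)^{-1}=1$) supplying the $(1-1/r)$ contribution. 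The log-determinant integral is the Shannon-transform integral $\int\log(1+\gamma x)\,dF_\mM(x)$; a clean route is to note $\frac{d}{d\gamma}\int\log(1+\gamma x)\,dF_\mM = \frac{1}{\gamma}\bigl[1-\int(1+\gamma x)^{-1}dF_\mM\bigr]$, so it is obtained from the \emph{same} Stieltjes transform by integration in $\gamma$, producing the two logarithmic terms of $V(\gamma,r)$ (the point mass contributes nothing since $\log(1+\gamma\cdot 0)=0$). Collecting $\frac{r}{2}\int(1+\gamma x)^{-1}dF_\mM + \frac{r}{2}\int\log(1+\gamma x)\,dF_\mM - \frac{r}{2}$ and substituting $\gamma=1/\lambda$, these recombine into the claimed covariance term $-\frac{\lambda}{8}\mathcal{F}(1/\lambda,r)+\frac{1}{2}V(1/\lambda,r)$, which together with the retained $\ell_2$ term gives the stated limit.

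The main obstacle is the explicit evaluation of these two integrals and the bookkeeping needed to land precisely on $\mathcal{F}$ and $V$, rather than the random-matrix input itself (which Lemma~\ref{lemma:nonlinear} supplies). The trace integral is a single Stieltjes-transform evaluation, but the log-determinant/Shannon-transform integral is more delicate: one must carry the square-root branch correctly across the edges $(1\pm\sqrt r)^2$, separate the $r>1$ and $r\le 1$ cases (point mass present or absent), and verify that the $\frac{r}{2}\int(1+\gamma x)^{-1}dF_\mM$ and $-\frac{r}{2}$ pieces recombine with the logarithmic terms into exactly the stated constants. Organizing the computation through $\frac{d}{d\gamma}V$ as above reduces both to one closed-form Stieltjes expression and is the safest way to control these constants.
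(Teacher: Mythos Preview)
Your proposal does not address Lemma~\ref{lemma:nonlinear} at all. That lemma is a cited result from \cite{pennington2017nonlinear}; the paper does not prove it, and neither do you---you explicitly \emph{invoke} it in your second step (``Substituting $\xi=0$ and $\eta=1$ into the self-consistent equation of Lemma~\ref{lemma:nonlinear}\dots''). What you have actually written is a proof of Theorem~\ref{thm:over_BIC}, the statement that uses Lemma~\ref{lemma:nonlinear} as an input to compute the limit of $\frac{1}{n}D(P^*_{\hat{W}|S=\vz^n}\|\pi)$.

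If the intended target was Theorem~\ref{thm:over_BIC}, then your argument is correct and follows the paper's proof in structure: decompose the Gaussian KL into the $\ell_2$ piece, a trace piece, and a log-determinant piece; recognize the last two as linear spectral statistics of $\mM=\mB^\top\mB/n$; specialize Lemma~\ref{lemma:nonlinear} with $\eta=1,\ \xi=0$ to obtain the Marchenko--Pastur limit; then evaluate the $\eta$-transform and Shannon transform of Marchenko--Pastur in closed form. The only methodological difference is in that last evaluation. The paper simply quotes the closed-form $\eta$-transform and Shannon transform from \cite{tulino2004random} (stated there as Lemma~\ref{lemma:shannon}), whereas you propose to rederive them from the Stieltjes transform together with the identity $\frac{d}{d\gamma}\mathcal{V}_X(\gamma)=\frac{1}{\gamma}\bigl(1-\eta_X(\gamma)\bigr)$. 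Your route is more self-contained but otherwise equivalent; citing Lemma~\ref{lemma:shannon} directly would spare you the branch-cut and point-mass bookkeeping you correctly flag as the main obstacle.
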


This lemma characterizes the spectral density of random matrix $\mM$ for any zero-mean activation functions. However, these implicit equations need to be evaluated numerically, and it is hard to obtain a closed-form expression or provide more insights.

If we further assume that the assumptions in (\ref{equ:act-func-conditions}) are satisfied, i.e., $\mathbb{E}[f(\varepsilon)^2]=1$, and $\mathbb{E}[f'(\varepsilon)]^2=0$, then the result in Lemma~\ref{lemma:nonlinear} can be simplified significantly, as the probability density of the  eigenvalues for random matrix $\mM$ will converge to the well-known Marchenko-Pastur distribution
with shape parameter $r=p/n$, i.e.,
\begin{equation}
        dF_\mM(x) \ {\rightarrow}\  \mathcal (1-\frac{1}{r})^{+}\delta(x)+\frac{\sqrt{(x-a)^{+}(b-x)^{+}}}{2\pi r x},
\end{equation}
as $n$, $d$, $p$ all go to infinity, where $(z)^+ \triangleq\max\{0,z\}$, and $a\triangleq(1-\sqrt{r})^2$, and $b \triangleq (1+\sqrt{r} )^2$. Thus, we focus on this case to obtain a  convenient, closed-form expression  for  mutual information.

The following lemma from Sections 2.2.2 and 2.2.3 in \cite{tulino2004random} characterizes the $\eta$-transform and Shannon transform of the Marchenko-Pastur distribution.

\begin{lemma} \label{lemma:shannon}
The $\eta$ and Shannon transform of a nonnegative random variable $X$ are defined as
\begin{equation}
   \eta_X(\gamma) \triangleq \mathbb{E}[\frac{1} {1+\gamma X}],\quad \mathcal{V}_X(\gamma) \triangleq \mathbb{E}[\log (1+\gamma X)],
\end{equation}
respectively. If $X$ is distributed according to Marchenko-Pastur distribution with shape parameter $r=p/n$, then
\begin{align}
   \eta_X(\gamma) &= 1 - \frac{\mathcal{F}(\gamma,r)}{4r\gamma},\\
   \mathcal{V}_X(\gamma) &=  \log \left(1+ \gamma -\frac{1}{4} \mathcal{F}(\gamma,r) \right) + \frac{1}{r}\log \left(1+ \gamma r -\frac{1}{4} \mathcal{F}(\gamma,r) \right) -\frac{1}{4r\gamma}\mathcal{F}(\gamma,r),
\end{align}
where
\begin{equation}
    \mathcal{F}(\gamma,r) \triangleq \left(\sqrt{\gamma(1+\sqrt{r})^2+1} - \sqrt{\gamma(1-\sqrt{r})^2+1}\right)^2.
\end{equation}
\end{lemma}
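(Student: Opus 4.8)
The plan is to reduce both transforms to the Stieltjes transform of the Marchenko--Pastur law and then solve everything in closed form. Write $F_r$ for the $\mathrm{MP}(r)$ law, with density $(1-1/r)^{+}\delta_0 + \frac{\sqrt{(x-a)(b-x)}}{2\pi r x}\mathbf{1}_{[a,b]}(x)$ where $a=(1-\sqrt r)^2$, $b=(1+\sqrt r)^2$, and let $m(z)=\int (x-z)^{-1}\,dF_r(x)$ be its Stieltjes transform. It is classical (see e.g.\ \cite{tulino2004random}) that $m$ is the branch, analytic on $\mathbb{C}\setminus[a,b]$ with $\mathrm{Im}\,m(z)>0$ for $\mathrm{Im}\,z>0$ and $m(z)\to0$ as $z\to\infty$, of the quadratic
\begin{equation*}
r\,z\,m(z)^2+(z+r-1)\,m(z)+1=0 ,
\end{equation*}
and this branch automatically encodes the atom at the origin when $r>1$. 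Since $\eta_X(\gamma)=\mathbb{E}[(1+\gamma X)^{-1}]=\tfrac1\gamma\,m(-1/\gamma)$ and $\mathcal{V}_X(\gamma)=\mathbb{E}[\log(1+\gamma X)]$, both quantities become explicit functions of $m$.

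For the $\eta$-transform I would substitute $z=-1/\gamma$ and set $\eta=m(-1/\gamma)/\gamma$; the quadratic becomes $r\gamma\,\eta^2-(\gamma r-\gamma-1)\eta-1=0$. Its discriminant is $\gamma^2(1-r)^2+2\gamma(1+r)+1$, which, by the elementary identity $\gamma^2(1-r)^2+2\gamma(1+r)+1=(\gamma(1+\sqrt r)^2+1)(\gamma(1-\sqrt r)^2+1)$ together with $\mathcal{F}(\gamma,r)=2(\gamma(1+r)+1)-2\sqrt{(\gamma(1+\sqrt r)^2+1)(\gamma(1-\sqrt r)^2+1)}$, equals $(\gamma(1+r)+1-\tfrac12\mathcal{F}(\gamma,r))^2$. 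The root consistent with the analyticity/positivity constraints then collapses to $\eta_X(\gamma)=1-\frac{\mathcal{F}(\gamma,r)}{4r\gamma}$; one can double-check the branch by verifying $\eta_X(0)=1$ (since $\mathcal{F}(\gamma,r)=O(\gamma^2)$ as $\gamma\to0$) and $\eta_X(\gamma)\to(1-1/r)^{+}$ as $\gamma\to\infty$.

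For the Shannon transform I would differentiate under the integral sign (legitimate since $\mathbb{E}[X]=1<\infty$): $\frac{d}{d\gamma}\mathcal{V}_X(\gamma)=\mathbb{E}\big[\tfrac{X}{1+\gamma X}\big]=\frac{1-\eta_X(\gamma)}{\gamma}=\frac{\mathcal{F}(\gamma,r)}{4r\gamma^2}$, with initial condition $\mathcal{V}_X(0)=0$, hence $\mathcal{V}_X(\gamma)=\int_0^\gamma\frac{\mathcal{F}(t,r)}{4r t^2}\,dt$. It then remains to check that the claimed closed form $\log(1+\gamma-\tfrac14\mathcal{F})+\tfrac1r\log(1+\gamma r-\tfrac14\mathcal{F})-\tfrac{1}{4r\gamma}\mathcal{F}$ is the antiderivative with value $0$ at $\gamma=0$. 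Using $\tfrac12\mathcal{F}=\gamma(1+r)+1-\sqrt{(\gamma(1+\sqrt r)^2+1)(\gamma(1-\sqrt r)^2+1)}$ one finds the simplifications $1+\gamma-\tfrac14\mathcal{F}=\tfrac12(1+\gamma(1-r)+s)$ and $1+\gamma r-\tfrac14\mathcal{F}=\tfrac12(1-\gamma(1-r)+s)$ with $s\triangleq\sqrt{(1-\gamma(1-r))^2+4\gamma}$, which make the derivative computation manageable; the atom at $x=0$ (when $r>1$) contributes $\log(1+\gamma\cdot0)=0$ to $\mathcal{V}_X$ and is already absorbed in $m$, so it raises no issue.

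The main obstacle is this last verification: differentiating the two-logarithm form and showing, after inserting $\mathcal{F}'(\gamma,r)$ (which still carries the square root $s$), that the logarithmic and rational pieces telescope exactly to $\mathcal{F}(\gamma,r)/(4r\gamma^2)$. This is elementary but the only genuinely effortful step; the $\eta$-transform part is essentially a one-line quadratic solve once the identity for $\mathcal{F}$ is recognized.
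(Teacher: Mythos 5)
The paper does not actually prove this lemma: it is quoted verbatim from Sections 2.2.2--2.2.3 of Tulino and Verd\'u, so there is no internal proof to compare against. Your blind derivation is correct and is essentially the standard one from the random-matrix literature: represent $\eta_X(\gamma)=\tfrac1\gamma m(-1/\gamma)$ via the Stieltjes transform, solve the Marchenko--Pastur quadratic $rzm^2+(z+r-1)m+1=0$ (your discriminant factorization and the identity $\sqrt{D}=\gamma(1+r)+1-\tfrac12\mathcal{F}$ are right, and fixing the branch by $\eta_X(0)=1$, $\eta_X(\infty)=(1-1/r)^{+}$, together with $D>0$ for all $\gamma\ge0$ so the roots never cross, is a legitimate way to sidestep the usual branch-of-the-square-root bookkeeping, including the atom at $0$), and then recover the Shannon transform from $\gamma\,\mathcal{V}_X'(\gamma)=1-\eta_X(\gamma)$ with $\mathcal{V}_X(0)=0$, the differentiation under the integral being justified by the dominating function $X$ with $\mathbb{E}[X]=1$. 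The one step you leave as "effortful" does close, and more cleanly than brute-force differentiation: writing $u=1+\gamma-\tfrac14\mathcal{F}$, $v=1+\gamma r-\tfrac14\mathcal{F}$, your half-sum expressions give the algebraic identity $u\,v\,\tfrac14\mathcal{F}=r\gamma^2$ (since $uv=\tfrac12\bigl(1+\gamma(1+r)+s\bigr)$ and $\tfrac14\mathcal{F}=\tfrac12\bigl(1+\gamma(1+r)-s\bigr)$ with $s=\sqrt{D}$); differentiating this constraint and the candidate antiderivative $\log u+\tfrac1r\log v-\tfrac{1}{4r\gamma}\mathcal{F}$, the required equality $\tfrac{u'}{u}+\tfrac{v'}{rv}=\tfrac{\mathcal{F}'}{4r\gamma}$ reduces exactly to that same constraint, so everything telescopes to $\mathcal{F}/(4r\gamma^2)$ as needed. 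In short: the proposal is a complete and correct self-contained proof of a result the paper only cites, at the cost of the closed-form algebra that the citation outsources.
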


Equipped with the aforementioned tools from random matrix theory, we could proceed our analysis,
\begin{align}
 \frac{1}{n} \log \det \big(\mI_p + \frac{1}{\lambda n}  \mB^\top \mB \big) 
    = \frac{r}{p} \sum_{i=1}^p\log \big(1+ \frac{1}{\lambda} \lambda_i(\frac{1}{n} \mB^\top \mB) \big),
\end{align}
where the notation $\lambda_i(\cdot)$ denote the eigenvalues of the matrix for $i=1,\cdots,p$. As shown in Lemma~\ref{lemma:shannon},  we have
\begin{equation}
\frac{1}{p} \sum_{i=1}^p\log \big(1+ \frac{1}{\lambda} \lambda_i(\frac{1}{n} \mB^\top \mB) \big) \to  \int_0^\infty  \log \big(1+ \frac{x}{\lambda} ) d F_{\mM}^n(x) = \mathcal{V}_X(1/\lambda)
\end{equation}
almost surely, when $n,d,p \to \infty$, $p/n=r$. Thus, in the over-parameterized regime, we have
\begin{equation}\label{equ:KL_logdet}
   \frac{1}{n}  \log \det \big(\mI_p + \frac{1}{\lambda n}  \mB^\top \mB\big)  \to r\cdot\mathcal{V}_X(1/\lambda) = {V}(1/\lambda, r).
\end{equation}
    
    

And the trace term can be simplified as,
\begin{equation}
\frac{1}{n}\mathrm{tr}(\mI_p + \frac{1}{\lambda n}  \mB^\top \mB \big)^{-1} = r\frac{1}{p}\sum_{i=1}^{p}\frac{1}{\big(1+ \frac{1}{\lambda} \lambda_i(\frac{1}{n} \mB^\top \mB) \big)},
\end{equation}
which will converge to the following expression by Lemma~\ref{lemma:shannon}, when $n,d,p \to \infty$, $p/n=r$,
\begin{equation}\label{equ:KL_trace}
   r\frac{1}{p}\sum_{i=1}^{p}\frac{1}{\big(1+ \frac{1}{\lambda} \lambda_i(\frac{1}{n} \mB^\top \mB) \big)}
   \to r\int_{0}^{\infty} \frac{1}{1+ \frac{x}{\lambda} } d F_{\mM}^n(x)
   =r(1 -\frac{ \mathcal{F}(\frac{1}{\lambda},r) }{4r\frac{1}{\lambda}}).
\end{equation}

Combine \eqref{equ:KL_logdet} and \eqref{equ:KL_trace} with \eqref{equ:KL_app}, we obtain the following result
\begin{equation}
\frac{1}{n}D(P^*_{\hat{W}|S=\vz^n}\|\pi) \to \frac{\lambda}{2\sigma^2} \|\hat{W}_{\lambda}\|_2^2  -\frac{ \lambda }{8} \mathcal{F}(\frac{1}{\lambda},r)+ \frac{1}{2}{V}(1/\lambda, r).    
\end{equation}

\subsection{Empirical Behavior of Covariance Term}\label{sec:exp_cov}




\begin{figure}[!t]
    \centering
    \includegraphics[width=0.325\columnwidth]
    {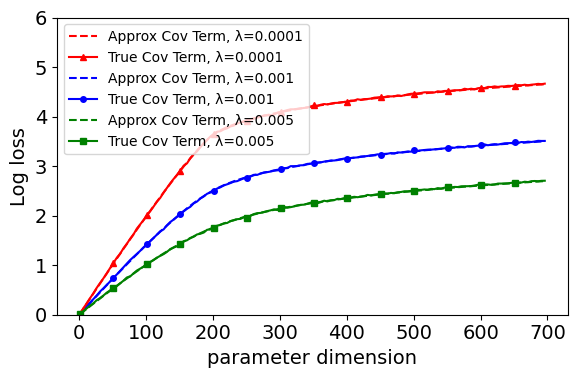} \includegraphics[width=0.325\columnwidth]
    {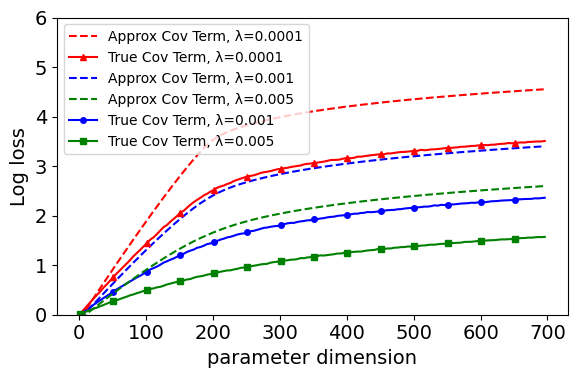}   \includegraphics[width=0.325\columnwidth]{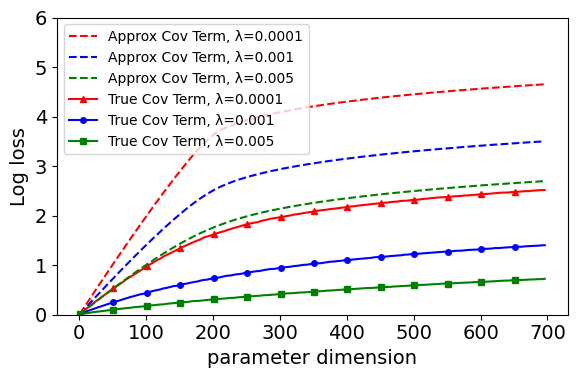}
    \caption{A comparison between $1/n(\log |\mSigma|+\mathrm{tr}(\mSigma)^{-1} -p)$ and the asymptotic approximation of the covariance term in Theorem~\ref{thm:over_BIC} for different value of $\lambda$ and for different activation functions:  $f(x) = (x^2-1)/\sqrt{2}$ (\textbf{left}), ReLU  (\textbf{middle}) and Sigmoid  (\textbf{right}). We adopt the same experiment settings as in Section~\ref{sec:exp_over}, and we change $r =p/n$ by fixing $n=200$ and varying $p$.}
    \label{fig:shannon}
\end{figure}

To show that Theorem~\ref{thm:over_BIC} can provide a good approximation for the asymptotic behavior of the log determinant and trace term in~\eqref{equ:kl_bic},
 we plot in Figure~\ref{fig:shannon}
both the term $1/n(\log |\mSigma|+\mathrm{tr}(\mSigma)^{-1} -p)$ with finite data and $\frac{1}{2}V(1/\lambda, r)-\frac{ \lambda }{8} \mathcal{F}(\frac{1}{\lambda},r)$ in the over-parameterized Gibbs-based BIC for different activation functions with varying regularizer parameters $\lambda$.
As shown from Figure~\ref{fig:shannon}(left),
our theoretical results provide a good proxy for the asymptotic behavior of the covariance term, even for activation functions (e.g., ReLU and Sigmoid in Figure~\ref{fig:shannon}(middle and right))
that do not satisfy the assumptions in Theorem~\ref{thm:over_BIC}.
This is evidence that the particular choice of activation function
does not significantly influence the asymptotic behavior of the covariance term in Gibbs-based BIC.

\section{Additional Experimental Results
}\label{appx:exp}

\subsection{Classic regime}
We instantiate a two-layer RF model  described in~\eqref{equ:RF_model}, where the first layer is designated for feature mapping and is kept random, and we only train the parameter in the second layer.  We use the regularized negative log-likelihood in~\eqref{equ:RF_loss} as the loss function for SGLD to compute both $\mathrm{AIC}^+$ in~\eqref{equ:AIC+classical} and $\mathrm{BIC}^+$ in~\eqref{equ:BIC+classical}, which is different from the SGD algorithm used in classical AIC and BIC. 






For simplicity, we generate $n = 600$ training samples from this linear model 
\begin{equation}\label{equ:linear_truth}
     y_i = \vx_i^\top \vw^* +\epsilon_i,\quad \vw^*\in \mathbb{R}^p, \quad \|\vw^*\|_2^2=1, \quad\epsilon_i \sim \mathcal{N}(0, \sigma^2),
\end{equation}
with $p=80$, and noise $\sigma^2=0.2$.


In Figure~\ref{fig:under_loss} (Left), we plot the training and test MSEs achieved using SGD and SGLD, respectively. Although SGD and SGLD might yield different model parameters, they exhibit a similar trend in terms of MSE and Log-Loss (see Appendix~\ref{appx:exp}). The discrepancy in the training algorithms does not significantly impact model selection. Figure~\ref{fig:under_loss} (middle) demonstrates that both AIC and $\mathrm{AIC}^+$ follow a similar trend to the test MSE, selecting the model ($p=110$) that achieves the smallest population risk. On the other hand, due to larger penalty terms, BIC and $\mathrm{BIC}^+$ favor simpler models and identify the model most likely to generate the training data. As shown in Figure~\ref{fig:under_loss} (right), the classical BIC selects the model with $p=70$, while $\mathrm{BIC}^+$ favors the model with $p=80$.
In summary, our observations suggest that $\mathrm{AIC}^+$ and $\mathrm{BIC}^+$ based on SGLD exhibit similar model selection capabilities to the traditional AIC and BIC in the classical large $n$ regime.
\begin{figure}[!t]
    \centering
   \includegraphics[width=0.32\columnwidth]{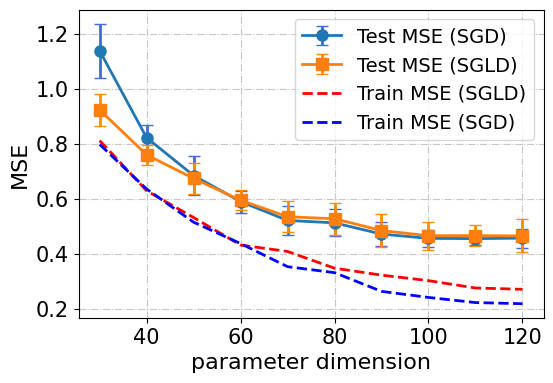}
\includegraphics[width=0.32\columnwidth]{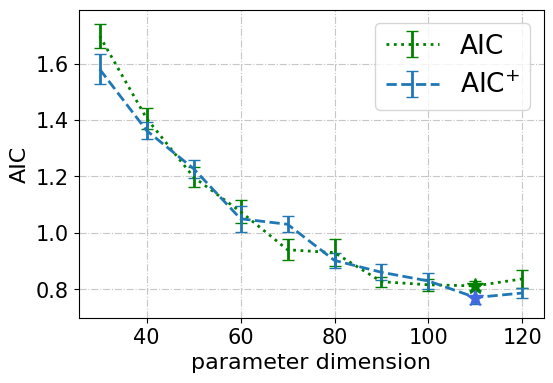}    \includegraphics[width=0.32\columnwidth]{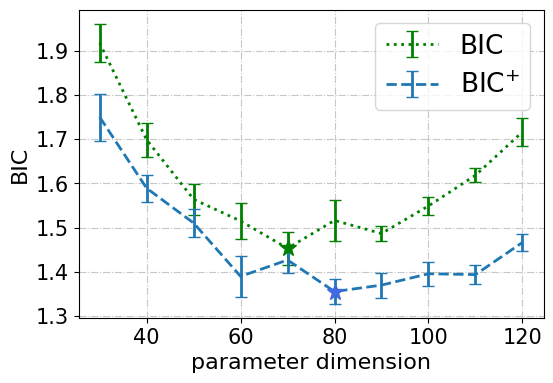}
   
    \caption{
        A comparison of SGD and SGLD in terms of MSE  (\textbf{left}). Comparisons of the classical AIC with $\mathrm{AIC}^+$ in \eqref{equ:AIC+classical} (\textbf{middle}), and the classical BIC with $\mathrm{BIC}^+$ in \eqref{equ:BIC+classical} (\textbf{right}). All experiments are conducted in the classical $n\gg p$ setting, with $n=600$, $p=[30,...,120]$. The preferred models selected by different information criteria are marked using stars with different colors.        
    }
\label{fig:under_loss}
\end{figure}

\subsection{Experimental Details}

In our experiments, the setup for the random feature model is akin to a two-layer neural network with a ReLU activation function. The first layer is initialized with a Gaussian distribution $\mathcal{N}(0,\frac{1}{\sqrt{d}})$ and remains unchanged throughout the training. The second layer, which starts off with zero values, is trained using the log loss function.

We set the step size for both SGD and MALA to $0.01$. The training process of MALA typically converges around  100 epochs. However, to ensure convergence to the Gibbs distribution, we continue to run MALA for a substantial number of epochs (600 in our experiment) even after achieving the minimum training loss. In addition, although we trained the noise variance in the classic regime to calibrate it to an appropriate scale, we fixed the noise variance $\sigma$ to 0.05 due to the high variability in model complexity in the over-parameterized regime. 

Our experiments, implemented in 
PyTorch requires less than 24 hours of training computation time on a single RTX 3090 GPU. To ensure result accuracy and plot error bars, we perform 50 runs for each setting.

\subsection{RF model loss}
Keeping remind that the following loss function is only used in MALA training process,
\begin{align}
    \mathcal{L}(\vw) 
     &=-\sum_{i=1}^n \Big( \log P(y_{i}|x_{i},w) -\frac{1}{n}\log \pi(w) \Big)\\
    &=\frac{1}{2\sigma^{2}} \sum_{i=1}^n \left(y_i - f\left(\frac{\vx_i^\top \mF}{\sqrt{d}}\right) \vw \right)^2 +\frac{n}{2} \log (2\pi \sigma^{2}) +\frac{\lambda n \|\vw\|_2^2}{2\sigma^{2}}.   
\end{align}
For traditional AIC and BIC, we optimize the following log-likelihood using SGD,
\begin{equation}
\mathcal{L}(\vw) 
=\frac{1}{2\sigma^{2}} \sum_{i=1}^n \left(y_i - f\left(\frac{\vx_i^\top \mF}{\sqrt{d}}\right) \vw \right)^2 +\frac{n}{2} \log (2\pi \sigma^{2}).
\end{equation}

\end{document}